\pgfplotsset{width=8cm,compat=newest}
\def\colorful{0}
\newcommand{\TV}{\dist_{\mathrm{TV}}}
\newcommand{\BuildDT}{\textsc{BuildDT}}
\newcommand{\ArbDist}{\mathcal{E}}
\newcommand{\InfEst}{\textsc{InfEst}}
\newcommand{\Lift}{\textsc{LiftLearner}}
\newlist{enumprop}{enumerate}{1} 
\setlist[enumprop]{label=\arabic*.,ref=\theproposition.\arabic*}
\newtheorem*{rep@theorem}{\rep@title}
\newcommand{\newreptheorem}[2]{
\newenvironment{rep#1}[1]{
 \def\rep@title{#2 \ref{##1}}
 \begin{rep@theorem}\itshape}
 {\end{rep@theorem}}}
\begin{document}
\title{Lifting uniform learners via distributional decomposition  
}

\author{}
\author{Guy Blanc \vspace{8pt} \\ \hspace{-5pt}{\sl Stanford}
\and \hspace{0pt} Jane Lange \vspace{8pt} \\ \hspace{-4pt}  {\sl MIT}
\and Ali Malik \vspace{8pt}\\ \hspace{-8pt} {\sl Stanford}
\and Li-Yang Tan \vspace{8pt} \\ \hspace{-8pt} {\sl Stanford}}  

\date{\vspace{15pt}\small{\today}}

\maketitle



\begin{abstract}
    We show how any PAC learning algorithm that works under the uniform distribution can be transformed, in a blackbox fashion, into one that works under an  arbitrary and unknown distribution~$\mathcal{D}$.  The efficiency of our transformation scales with the inherent complexity of~$\mathcal{D}$, running in $\poly(n, (md)^d)$ time for distributions over $\bits^n$ whose pmfs are computed by depth-$d$ decision trees, where $m$ is the sample complexity of the original algorithm.   For monotone distributions our transformation uses only samples from~$\mathcal{D}$, and for general ones it uses subcube conditioning samples.

A key technical ingredient is an algorithm which, given the aforementioned access to $\mathcal{D}$, produces an {\sl optimal} decision tree decomposition of $\mathcal{D}$: an approximation of $\mathcal{D}$ as a mixture of uniform distributions over disjoint subcubes.   With this decomposition in hand, we run the uniform-distribution learner on each subcube and combine the hypotheses using the decision tree. This algorithmic decomposition lemma also yields new algorithms for learning decision tree distributions  with runtimes that exponentially improve on the prior state of the art---results of independent interest in distribution learning.

\end{abstract}


\thispagestyle{empty}
\newpage 
\setcounter{page}{1}

\newcommand{\depth}{\mathrm{depth}}

\section{Introduction}

A major strand of research in learning theory concerns the learning of an unknown target  function $f : \bits^n\to\zo$ with respect to an unknown source distribution $\mathcal{D}$.  This line of work has both motivated and benefited from rich connections to the study of function complexity: underlying every new learning algorithm are new structural insights about the concept class of functions that $f$ belongs to.  Our work is motivated by the question of whether structural properties of the {\sl source distribution $\mathcal{D}$} can be similarly leveraged for the design of efficient algorithms.  




\paragraph{Distribution-free learning.} In Valiant's original distribution-free PAC learning model~\cite{Val84}, the target function $f$ is promised to belong to a {\sl known}, and often ``computationally simple" concept class $\mathscr{C}$, whereas no assumptions are made about the distribution $\mathcal{D}$.

Unfortunately, efficient algorithms have proven hard to come by even for simple concept classes.  Consider, for instance, the class of polynomial-size DNF formulas---the motivating example that Valiant used throughout his paper. Despite decades of effort, the current fastest algorithm for this class runs in exponential time, $\exp(\tilde{O}(n^{1/3}))$~\cite{KS04}.  For other classes, even formal hardness results have been established.  For example, under standard cryptographic assumptions it has been shown that there are no efficient algorithms for learning intersections of polynomially many halfspaces~\cite{KS09}.  

A major source of difficulty in the design of efficient learning algorithms in this model stems from the absence of assumptions about $\mathcal{D}$.  Even if the concept class $\mathscr{C}$ is assumed to be simple, this can be negated by the fact that the underlying distribution $\mathcal{D}$ is arbitrarily complex. Indeed, this is precisely the idea underlying existing hardness results for distribution-free learning. 

\paragraph{Distribution-specific learning.} This leads us to the {\sl distribution-specific} variant of PAC learning where $\mathcal{D}$ is known rather than arbitrary \cite{BenedekItai91,Natarajan92}.  For the domain
$\bits^n$, the most popular assumption takes $\mathcal{D}$ to be the uniform distribution---an assumption that opens the door to many more positive results.  For example, 
there is a quasipolynomial-time algorithm for learning polynomial-size DNF formulas under the uniform distribution~\cite{Ver90}; if the learner is additionally given query access to $f$, there are even polynomial-time algorithms~\cite{Jac97,GKK08}. {{ Similarly, there is a  quasipolynomial-time algorithm for learning intersections of polynomially many halfspaces under the uniform distribution~\cite{KOS04}---a sharp contrast to the distribution-free setting where even learning intersections of {\sl two} halfspaces in subexponential time is an longstanding open problem~\cite{KS08,She13}.  Indeed, there is by now a suite of powerful techniques for the design of uniform-distribution learning algorithms, notably  ``meta-algorithms" for learning an arbitrary concept class $\mathscr{C}$ with performance guarantees that scale with its Fourier properties.  Famous examples include the algorithms for learning (approximately) low-degree functions~\cite{LMN93} and (approximately) sparse functions~\cite{KM93}, as well as their agnostic variants~\cite{KKMS08,GKK08}; see Chapter \S3 of~\cite{ODBook} for an in-depth treatment.}}

While this setting has proved to be fertile grounds for the development of sophisticated techniques and fast algorithms, it is admittedly a stylized theoretical model, since real-world data distributions often exhibit correlations and are not uniform.  For this reason, uniform-distribution learners have thus far been limited in their practical relevance.  

\subsection{This work}  These two settings, distribution-free and uniform-distribution learning, correspond to two extremes in terms of distributional assumptions.  Distribution-free algorithms make no assumptions about the distribution, but the design of such algorithms that are computationally efficient has proven correspondingly challenging.  On the other hand, we have a wealth of techniques and positive results in the uniform-distribution setting, but the guarantees of such algorithms only hold under a strong, often unrealistic assumption on the data distribution. 
Due to these vast differences, research in these two settings has mostly proceeded independently with few known connections. 

The motivation for our work is the search for fruitful middle grounds between these two extremes, where efficient algorithms can be obtained for expressive concept classes and where their guarantees hold for broad classes of distributions.  More generally, we ask:
\begin{quote}
    \emph{Can we interpolate between the two extremes via notions of {\sl distribution complexity}, and design learning algorithms whose efficiency scale with the inherent complexity of $\mathcal{D}$?}
\end{quote} 
Relatedly, can the large body of existing results and techniques for learning under the uniform distribution be lifted, ideally in a blackbox fashion, to non-uniform but nevertheless ``simple" distributions? 
\section{Our results} 

We provide affirmative answers to these questions via a natural notion of distribution complexity: 

\begin{definition}[Decision tree complexity of $\mathcal{D}$] 
\label{def:DT} 
The {\sl decision tree complexity} of a distribution $\mathcal{D}$ over $\bits^n$, denoted $\depth(\mathcal{D})$, is the smallest integer $d$ such that its probability mass function (pmf) can be computed by a depth-$d$ decision tree. 
\end{definition} 

Decision tree complexity interpolates between the uniform distribution on one extreme (depth $0$) and arbitrary distributions on the other (depth $n$).  

This notion, {{first considered by Feldman, O'Donnell, and Servedio~\cite{FOS08},}} generalizes other well-studied notions of complexity of high-dimensional discrete distributions and is itself a special cases of others.  A {\sl $d$-junta distribution} $\mathcal{D}$, introduced by Aliakbarpour, Blais, and Rubinfeld~\cite{ABR16}, is one for which there exists a set of $d$ coordinates $J \sse [n]$ such that for every assignment $\rho \in \bits^J$, the conditional distribution $\mathcal{D}_{J \leftarrow\rho}$ is  uniform.  Every $d$-junta distribution has decision tree complexity $d$, but a depth-$d$ decision tree distribution can have junta complexity as large as $2^d$.  On the other hand,  decision tree distributions are a special case of mixtures of subcubes~\cite{CM19}, since a depth-$d$ decision tree induces a partition of $\bits^n$ into $2^d$ many disjoint subcubes. Mixtures of subcubes are in turn a special case of mixtures of product distributions over discrete domains~\cite{FM99,Cry99,CGG01,FOS08}.

\subsection{First main result: Lifting uniform-distribution learners}
 
 Our algorithms and analyses will use the notion of {\sl influence} of variables.  This notion is most commonly applied to boolean-valued functions, and in this work we extend its study to the pmfs of distributions: 
 
 \begin{definition}[Influence of variables on distributions]
 \label{def:inf} 
Let $\mathcal{D}$ be a distribution over $\bits^n$ and  $f_\mathcal{D}(x) = 2^n \cdot \mathcal{D}(x)$ be its pmf scaled up by the domain size.\footnote{This $2^n$ normalisation factor makes the average value of $f_\mathcal{D}$ exactly $1$, lending to a cleaner analysis.}  The {\sl influence} of a coordinate $i\in [n]$ on a distribution $\mathcal{D}$ over $\bits^n$ is the quantity 
 \[ 
 \Inf_i(f_\mathcal{D}) \coloneqq  \Ex_{\bx \sim \mcU^n}\big[ |f_\mathcal{D}(\bx)-f_\mathcal{D}(\bx^{\sim i})| \big], 
 \]
 where $\mcU^n$ denotes the uniform distribution over $\bits^n$ and $\bx^{\sim i}$ denotes $\bx$ with its $i$-th coordinate rerandomized.  The {\sl total influence} of $\mathcal{D}$ is the quantity $\Inf(f_\mathcal{D}) \coloneqq \sum_{i=1}^n \Inf_i(f_\mathcal{D}).$
 \end{definition}

With~\Cref{def:DT,def:inf} in hand we can now state our first main result.  For clarity, we first state it assuming a unit-time oracle that computes the influences of variables: 

\begin{theorem}[Lifting uniform-distribution learners; see~\Cref{thm:lift-formal} for the formal version]
\label{thm:lift} 
    For any concept class $\mathscr{C}$ of functions $f: \bits^n \to \zo$ closed under restrictions, assuming a unit-time influence oracle, if there is an algorithm for learning $\mathscr{C}$ under the uniform distribution to accuracy $\eps$ with sample complexity $m$ and running time $\poly(n,m)$, there is an algorithm for learning $\mathscr{C}$ under depth-$d$ decision tree distributions with sample complexity and running time
    \begin{equation*}
        M = \poly(n) \cdot \left(\frac{dm}{\eps}\right)^{O(d)}.
    \end{equation*}
\end{theorem}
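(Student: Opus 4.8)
The idea is to reduce learning $\mathscr{C}$ under the depth-$d$ decision tree distribution $\mathcal{D}$ to roughly $(dm/\eps)^{O(d)}$ invocations of the given uniform-distribution learner. The vehicle is an approximate \emph{decision tree decomposition} of $\mathcal{D}$: a decision tree $T$ over $\bits^n$ with $L$ leaves, inducing a partition of $\bits^n$ into disjoint subcubes $\{C_\ell\}$, such that on each $C_\ell$ the conditional distribution $\mathcal{D}_\ell := \mathcal{D}\,|\,C_\ell$ is $\gamma$-close in total variation to uniform over $C_\ell$, for a parameter $\gamma$ fixed at the end. Such a $T$ exists with $\gamma = 0$ and $L \le 2^d$ --- namely the tree computing $\mathcal{D}$'s pmf, which is literally constant on each of its leaf-subcubes --- but that tree is unknown; the work is to recover an approximation of it from the influence oracle. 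Given $T$, I run the uniform learner separately on each subcube $C_\ell$ --- legitimate since $\mathscr{C}$ is closed under restrictions, so the target restricted to $C_\ell$ lies in $\mathscr{C}$ --- and output the hypothesis $h$ that routes an input $x$ to its leaf $\ell(x)$ of $T$ and returns the corresponding leaf-hypothesis $h_{\ell(x)}(x)$.

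\textbf{Building the decomposition.} I grow $T$ top-down: starting from the trivial one-leaf tree, I repeatedly take a leaf $\ell$ whose subcube is \emph{bad} --- $\TV(\mathcal{D}_\ell, \mcU) > \gamma$ while $\mathcal{D}(C_\ell)$ is not negligible --- and split it on the coordinate $i$ maximizing $\Inf_i(f_{\mathcal{D}_\ell})$, supplied by the unit-time influence oracle applied to the conditional distribution $\mathcal{D}_\ell$. Two facts drive this. First, \emph{a bad subcube always has an influential coordinate}: a restriction of a depth-$d$ decision tree is again a depth-$\le d$ decision tree, so $f_{\mathcal{D}_\ell}$ is computed by a depth-$\le d$ tree with mean $1$ over $C_\ell$, and telescoping $\Ex_{\mcU|C_\ell}\!\big[\,|f_{\mathcal{D}_\ell} - 1|\,\big]$ down that tree --- each internal node contributes, after accounting for its depth, at most the $L_1$-influence of the variable it queries, while the leaves contribute $0$ --- gives $\max_i \Inf_i(f_{\mathcal{D}_\ell}) = \Omega(\gamma \cdot 2^{-d})$ when $C_\ell$ is bad. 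Second, \emph{the process stops quickly}: since the split coordinate is relevant to $f_{\mathcal{D}_\ell}$, each split prunes the restriction of the unknown pmf-tree to $C_\ell$ by at least one leaf, so the depth is crudely at most $2^d$; a natural quantitative handle is the potential $\sum_{\text{leaves }\ell}\mathcal{D}(C_\ell)\,\Inf(f_{\mathcal{D}_\ell})$, which starts at $\Inf(f_\mathcal{D}) \le d$ and, by a one-line identity, decreases by exactly $\mathcal{D}(C_\ell)\,\Inf_i(f_{\mathcal{D}_\ell})$ at each split. I claim --- and this algorithmic decomposition lemma is the heart of the matter --- that with the influence-maximizing rule one can stop with $L = (d/\gamma)^{O(d)}$ leaves while keeping every non-pruned leaf $\gamma$-close to uniform and keeping the total $\mathcal{D}$-mass of the negligible unsplit leaves at most $\eps$.

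\textbf{The reduction.} Fix the decomposition $T$. Call a leaf $\ell$ \emph{heavy} if $\mathcal{D}(C_\ell) \ge \eps/L$; for a non-heavy leaf set $h_\ell \equiv 0$, costing at most $L\cdot(\eps/L) = \eps$ in total error. For a heavy leaf, draw samples from $\mathcal{D}$ and keep those in $C_\ell$ (a syntactic check on the fixed coordinates); since $\mathcal{D}(C_\ell) \ge \eps/L$, after $O((mL/\eps)\log L)$ draws we have collected $m' = \Theta(m\log L)$ of them except with probability $1/\poly(L)$. Their joint law is $\mathcal{D}_\ell^{\otimes m'}$, within $m'\gamma$ of $\mcU^{\otimes m'}$ in total variation, so feeding them to the uniform learner --- with confidence amplified to $1 - 1/\poly(L)$ by $O(\log L)$ repetitions and a cross-validation step --- yields, except with probability $1/\poly(L) + m'\gamma$, a hypothesis $h_\ell$ with $\Pr_{\mcU}[h_\ell \ne f|_{C_\ell}] \le \eps$ and hence $\Pr_{\mathcal{D}_\ell}[h_\ell \ne f|_{C_\ell}] \le \eps + \gamma$. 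Stitching with $T$,
\[
\Pr_{\bx\sim\mathcal{D}}\!\big[h(\bx)\ne f(\bx)\big] \;=\; \sum_\ell \mathcal{D}(C_\ell)\cdot\Pr_{\bx\sim\mathcal{D}_\ell}\!\big[h_\ell(\bx)\ne f|_{C_\ell}(\bx)\big] \;\le\; (\eps+\gamma) + \eps + o(1),
\]
the $o(1)$ being a union bound of the failure events over the $\le L$ leaves. Choosing $\gamma$ small enough that $m'\gamma = O(\eps)$ --- a $\Theta(\eps/(m\cdot\mathrm{polylog}(dm/\eps)))$ choice --- gives $L = (d/\gamma)^{O(d)} = (dm/\eps)^{O(d)}$, a total sample complexity of $L\cdot O((mL/\eps)\log L) = \poly(n)\cdot(dm/\eps)^{O(d)}$, and the same running time up to the $\poly(n,m)$ per-call cost of the uniform learner and the unit-time influence queries; running everything at target accuracy $\eps/O(1)$ delivers accuracy $\eps$, so $M = \poly(n)\cdot(dm/\eps)^{O(d)}$.

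\textbf{Where the difficulty lies.} The reduction and its error accounting are routine; the substance is the decomposition lemma, and specifically the bound $L = (d/\gamma)^{O(d)}$. The obvious potential decreases only in proportion to $\mathcal{D}(C_\ell)$ per split, which is tiny at deep leaves, so on its own it lets the total mass of unsplit leaves grow to $\approx 2^d/\gamma \gg \eps$; and the ``each split prunes the pmf-tree'' observation only bounds the depth by $2^d$. Showing that the influence-maximizing rule in fact terminates with $(d/\gamma)^{O(d)}$ pieces while discarding only $\eps$ worth of $\mathcal{D}$-mass is exactly the point where the depth-$d$ structure of $\mathcal{D}$ must be used in full force, and I expect it to be the crux of the proof.
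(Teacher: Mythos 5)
Your reduction (second half) is essentially the paper's: partition $\bits^n$ via a decision tree whose leaves are near-uniform conditional distributions, run the uniform learner per leaf using the fact that an $m$-sample learner tolerates $O(1/m)$ of TV perturbation, and stitch; the error accounting matches the paper's \Cref{thm:lift-given-DT} and \Cref{prop:TV-distance-split}. The problem is the part you yourself flag as the crux: the decomposition lemma is not proved, and the specific algorithm you propose for it is not the one that achieves the claimed bound.

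Concretely, your greedy rule (always split a bad leaf on its single most influential coordinate) is exactly the ``simple and natural algorithm'' the paper discusses in its proof overview and then discards for the stronger bound: it is only shown to yield a depth-$O(d^2)$ hypothesis in time $2^{O(d^2)}$, which is not $(d/\gamma)^{O(d)}$ and does not give the theorem's $M = \poly(n)\cdot(dm/\eps)^{O(d)}$. Your candidate potential $\sum_\ell \mcD(C_\ell)\Inf(f_{\mcD_\ell})$ indeed does not close the argument (as you note, the per-split decrease is weighted by $\mcD(C_\ell)$, and the ``prune one leaf of the hidden pmf-tree'' observation bounds depth by $2^d$, hence leaf count by $2^{2^d}$). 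The paper's fix is to change the algorithm, not the analysis of greedy: $\BuildDT$ performs an \emph{exhaustive search} over all depth-$d$ trees that are ``everywhere $\tau$-influential'' (at each node it branches over \emph{every} variable of influence $\geq\tau$ in the current restriction --- at most $d/\tau$ of them, since the total influence of a depth-$d$ tree pmf is at most $d$ --- giving a search space of size $(d/\tau)^{d}$), and selects the candidate minimizing the expected total influence at its leaves, a surrogate objective that the influence oracle lets it evaluate. Correctness then rests on two ingredients you would need to supply: a pruning lemma imported from \cite{BLQT21focs} guaranteeing that \emph{some} everywhere $\tau$-influential tree of depth $d$ satisfies $2^{-n}\|f_\mcD - T^\star\|_1 \le d\tau$ (so the search space is nonempty of good trees), and the chain Efron--Stein / sensitivity inequalities (\Cref{lem:efron_stein,lem:inf_s_var}) converting ``small expected leaf influence'' back into small $\ell_1$ error, hence small TV distance. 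Without replacing your greedy construction by this search-plus-pruning argument (or an equivalent), the bound $L=(d/\gamma)^{O(d)}$ remains an unsupported assertion and the proof is incomplete.
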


{As an example setting of parameters,~\Cref{thm:lift} lifts a quasipolynomial-time uniform-distribution algorithm into one that still runs in quasipolynomial time, but now succeeds under any distribution with decision tree complexity $\polylog(n)$.  We note that such distributions are quite broad: the size of a depth-$d$ tree can be as large as $2^d = \mathrm{quasipoly}(n)$, corresponding to the mixture of that many subcubes.}  

The proof of~\Cref{thm:lift}   readily extends to lift {\sl agnostic} uniform-distribution  learners~\cite{Hau92,KSS94} to agnostic distribution-free ones.   As mentioned in the introduction, many algorithms for learning under the uniform distribution are  obtained through Fourier-analytic meta-algorithms.  By~\Cref{thm:lift}, we now have analogues of these meta-algorithms for distributions with low decision tree complexity.  

\paragraph{Estimating influences efficiently.} We have stated~\Cref{thm:lift} assuming that influences of variables on distributions can be computed exactly in unit time.  In the body of the paper we show how these quantities can be approximated to sufficiently high accuracy given access to $\mathcal{D}$.  For monotone distributions we show how this can be done using only samples from $\mathcal{D}$.  For general distributions, we use the notion of {\sl subcube conditioning samples}, proposed in~\cite{CRS15,BC18} and subsequently studied in~\cite{CCKLW21,CJLW21}: in this model, the algorithm specifies a subcube of $\bits^n$ and receives a draw $\bx \sim \mathcal{D}$ conditioned on $\bx$ lying in the subcube.  (For general distributions, estimating influences based only on samples is intractable as it can be easily seen to require $\Omega(\sqrt{2^n})$ many samples.) 

\paragraph{A more general result.} We obtain~\Cref{thm:lift} as a corollary of a more general result which shows how every uniform-distribution learner $\mathcal{A}$ that is robust to {\sl distributional noise} can be lifted in a way that the resulting runtime depends only on $\mathcal{A}$'s noise tolerance and not its sample complexity.  \Cref{thm:lift} follows since every $m$-sample algorithm is automatically tolerant to an $O(1/m)$ amount of distributional noise.  We achieve better parameters for algorithms that are tolerant to higher amounts of noise.


\subsection{Second main result: Learning decision tree distributions}

Our algorithm for~\Cref{thm:lift} proceeds in a two-stage manner: we first learn the decision tree structure of $\mathcal{D}$ and then use the uniform-distribution learner to learn $f$ restricted to each of the leaves of the tree.   To carry out the first stage, we give an algorithm that learns the {\sl optimal} decision tree decomposition of a distribution $\mathcal{D}$:

\begin{theorem}[Learning decision tree distributions]
\label{thm:learn-DT} 
    Let $\mcD$ be a distribution that is representable by a depth-$d$ decision tree.
    There is an algorithm that returns a depth-$d$ tree representing a distribution $\mcD'$
    such that $\TV(\mcD,\mcD') \le \eps$, with high probability over the draw of samples, and with running time and sample complexity $\poly(n) \cdot (d/\eps)^{O(d)}$. For monotone distributions, the algorithm only uses random samples from $\mcD$, and for general distributions, it uses subcube conditional samples.
\end{theorem}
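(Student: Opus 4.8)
The plan is to reduce the problem to recovering the decision-tree \emph{structure} of $\mcD$ and then to recover that structure by a depth-bounded recursive search guided by coordinate influences. For the reduction, observe that a depth-$d$ tree computing the pmf of $\mcD$ partitions $\bits^n$ into at most $2^d$ disjoint subcubes on each of which $\mcD$ is uniform, so $\mcD = \sum_\ell p_\ell\,\mcU_{C_\ell}$ with $p_\ell = \Pr_{\bx\sim\mcD}[\bx\in C_\ell]$ and $\mcU_{C_\ell}$ uniform on the leaf-subcube $C_\ell$. Once a partition close to the true one is fixed, estimating the $p_\ell$'s to additive error $\eps\cdot 2^{-\Theta(d)}$ is routine ($\poly(2^d/\eps)$ random or subcube-conditional samples), so all the work is in finding the partition.

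The search is a recursive routine $\BuildDT(\rho, b)$ on a subcube $\rho$ with remaining depth budget $b \le d$. If the estimated mass $\Pr_{\mcD}[\bx\in\rho]$ falls below $\beta \coloneqq \eps\cdot 2^{-\Theta(d)}$, or if $b = 0$, or if every estimated influence $\widehat{\Inf}_i(f_{\mcD_\rho})$ is below a threshold $\tau = \poly(\eps/d)$, we make $\rho$ a leaf carrying $\mcU_\rho$; the total $\TV$ charged by all prunings is at most $2^d\beta \le \eps/2$, and a leaf declared because all influences are small is close to uniform by a Poincar\'e-type inequality relating total influence of $f_{\mcD_\rho}$ to $\TV(\mcD_\rho, \mcU_\rho)$. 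Otherwise we form the candidate set $S = \{\, i : \widehat{\Inf}_i(f_{\mcD_\rho}) \ge \tau/2 \,\}$, recursively build $\BuildDT(\rho\cup\{x_i{=}b'\}, b{-}1)$ for every $i\in S$ and $b'\in\zo$, splice the two children of each $i$ under a query to $x_i$, and return whichever of these $|S|$ candidate subtrees has smallest estimated $\TV$ distance to $\mcD_\rho$ on fresh samples. The output is $\BuildDT$ on the full cube with budget $d$, with leaf masses re-weighted by empirical estimates.

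Correctness and efficiency rest on two structural facts about influences of pmfs. The \emph{progress lemma}: if $\depth(\mcD_\rho) \le b$ but $\mcD_\rho$ is $\eps'$-far in $\TV$ from every distribution of depth $b{-}1$ (in particular, $\eps'$-far from uniform when $b=1$), then there is a coordinate $x_i$ with $\Inf_i(f_{\mcD_\rho}) \ge \poly(\eps'/d)$ that is the root of an optimal tree for a nearby distribution, so passing to $\mcD_{\rho\cup\{x_i=b'\}}$ reduces the depth needed to capture the conditionals; one proves the influence lower bound by showing that a small influence at such a root would force $\mcD_\rho$ to collapse to depth $b{-}1$. Consequently, along the branch of the search that always selects such a coordinate, the depth of the running conditional drops by one per level and has reached $0$ (uniform) within $d$ levels, incurring only $O(\eps)$ total $\TV$ error; since $\BuildDT$ branches over \emph{all} of $S$ and returns the best candidate, its output is at least as good as the tree built along this good branch. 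The \emph{sparsity lemma}: $\Inf_i(f_{\mcD_\rho}) \le 1$ for every $i$ (genuine pmfs are nonnegative with average value $1$), and a more careful argument bounds $\sum_i \Inf_i(f_{\mcD_\rho})$ by $\poly(d/\eps)$ (using a depth-times-dynamic-range bound together with the fact that a subcube of large dynamic range carries almost all of the conditional mass and is therefore detected), so $|S| \le \poly(d/\eps)$. The recurrence $T(b) = 2|S|\cdot T(b{-}1) + \poly(n)$ then solves to $\poly(n)\cdot(d/\eps)^{O(d)}$, which bounds the running time and also the number of distinct conditionals touched; since each needs $\poly(d/\eps)$ samples (and a $1/\beta = 2^{\Theta(d)}$ rejection-sampling overhead in the monotone case), this bounds the sample complexity too. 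A union bound over the $(d/\eps)^{O(d)}$ search nodes handles the ``with high probability'' claim.

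It remains to estimate the influences. For monotone $\mcD$, every conditional $\mcD_\rho$ is monotone, hence $f_{\mcD_\rho}$ is monotone and $\Inf_i(f_{\mcD_\rho}) = \widehat{f_{\mcD_\rho}}(\{i\}) = \Ex_{\bx\sim\mcD_\rho}[\bx_i]$ --- simply the bias of coordinate $i$ under $\mcD_\rho$, read off from (rejection) samples of $\mcD$. In general, $\Inf_i(f_{\mcD_\rho})$ equals an $\ell_1$-type distance between the two decision-tree distributions $\mcD_{\rho\cup\{x_i=0\}}$ and $\mcD_{\rho\cup\{x_i=1\}}$, weighted by their marginal probabilities under $\mcD_\rho$, and is estimated with subcube-conditioning queries using the techniques of~\cite{CCKLW21,CJLW21}. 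The main obstacle is the progress lemma: quantifying how non-triviality of $\mcD_\rho$ --- more precisely, distance from the next-smaller depth class --- forces a \emph{depth-reducing} coordinate whose influence is only polynomially small in $\eps/d$, so the depth-$d$ search is never forced deeper, while simultaneously keeping the branching factor $|S|$ at $\poly(d/\eps)$ rather than $2^{\Theta(d)}$; the remainder is estimation-error bookkeeping.
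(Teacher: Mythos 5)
Your overall architecture --- a depth-$d$ recursive search that branches over every coordinate whose influence on the conditional pmf exceeds a threshold $\tau$, with branching factor bounded by $(\text{total influence})/\tau = O(d/\tau)$ --- matches the paper's, and your monotone-case influence estimator ($\Inf_i(f_{\mcD_\rho}) = \Ex_{\bx\sim\mcD_\rho}[\bx_i]$) is exactly theirs. But there are two genuine gaps at the heart of the argument. First, your selection rule --- return the candidate subtree with ``smallest estimated $\TV$ distance to $\mcD_\rho$ on fresh samples'' --- is not implementable within your sample budget: estimating the distance of an unknown distribution to a known one is the tolerant-testing problem, which requires $\Theta(2^n/n)$ samples, and no efficient subcube-conditioning estimator for it is known. (Estimating only the leaf masses of a candidate does not help; that yields a lower bound on $\TV$ which is blind to non-uniformity of $\mcD_\rho$ \emph{inside} a candidate leaf.) The paper's central move is precisely to sidestep this barrier: it selects the candidate minimizing the \emph{average total influence at the leaves}, $\Ex_{\bell}[\Inf((f_\mcD)_{\bell})]$, which is efficiently estimable, and proves this is a faithful proxy for the $\ell_1$/TV error in both directions (Efron--Stein in one direction, and $\Inf(f)\le 2s(f)\cdot\Var^{(1)}(f)$ for depth-$d$ trees in the other).

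Second, your per-leaf termination argument is unsound, and your ``progress lemma'' is exactly the missing theorem. You claim a node at which every individual influence is below $\tau$ is close to uniform ``by a Poincar\'e-type inequality,'' but Efron--Stein bounds $\TV(\mcD_\rho,\mcU)$ by the \emph{total} influence, and a depth-$d$ tree pmf can have up to $2^d$ relevant coordinates each of influence just under $\tau$, so the total influence can remain $\Theta(d)$ even when the maximum influence is tiny (unless you take $\tau \le \eps/n$, which inflates the branching factor to $n^{O(d)}$ and destroys the claimed runtime). The paper never argues leaf-by-leaf: it invokes a pruning lemma (Theorem~5 of \cite{BLQT21focs}) asserting that \emph{some} everywhere-$\tau$-influential depth-$d$ tree has $\ell_1$ error at most $d\tau$, hence average total leaf influence at most $4d^2\tau$, and then uses the fact that the algorithm outputs the minimizer of that quantity over its search space. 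Your progress lemma is a reformulation of this pruning lemma; you correctly flag it as ``the main obstacle,'' but it is the load-bearing structural step, and the one-sentence sketch you give is not a proof. Finally, for general distributions your influence estimator is deferred to ``techniques of~\cite{CCKLW21,CJLW21}''; the paper gives a concrete and simple one --- draw $\bx\sim\mcD_\rho$, condition on the two-point subcube $\{\bx,\bx^{\oplus i}\}$, estimate the relative weight $p$, and output $|2p-1|$ --- whose near-unbiasedness rests on the identity $\Inf_i(f_{\mcD_\rho}) = \Ex_{\bx\sim\mcD_\rho}\big[|2p(\bx)-1|\big]$.
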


{{\Cref{thm:learn-DT} is a result of independent interest in distribution learning.}}  Even setting aside properness, the performance guarantees of our algorithm  {{improves, quite dramatically, the prior state of the art for learning decision tree distributions and circumvents existing hardness results}}.  Aliakbarpour, Blais, and Rubinfeld~\cite{ABR16} gave an $n^{O(k)}$ time algorithm for learning $k$-junta distributions, which implies an $n^{O(2^d)}$ time algorithm for learning depth-$d$ decision tree distributions. Chen and Moitra~\cite{CM19} gave an $s^{s^3}\cdot n^{O(\log s)}$ time algorithm algorithm for learning the mixture of $s$ subcubes, which implies a $2^{2^{O(d)}}\cdot n^d$ time algorithm for learning depth-$d$ decision tree distributions.  Finally, Feldman, O'Donnell, and Servedio~\cite{FOS08} gave an $n^{O(m^3)}$ time algorithm algorithm for learning the mixture of $m$ product distributions, which implies an $n^{2^{O(d)}}$ time algorithm for learning depth-$d$ decision tree distributions.  These runtimes all have a doubly-exponential dependence on $d$, whereas ours only depends exponentially on $d$. { Note that the runtime of any  algorithm must have at least an exponential dependence on $d$ since that is the description length of a depth-$d$ decision tree.}

None of these prior algorithms are proper, and that fact that ours is is crucial to the application to~\Cref{thm:lift}: the decision tree structure specifies a decomposition of $\bits^n$ into disjoint subcubes, and it is on these subcubes that we run our uniform-distribution learner.  

\paragraph{Circumventing hardness results.} {{A novel aspect of~\Cref{thm:learn-DT} is that it sidesteps existing hardness results for learning decision tree distributions.}} \cite{FOS08} showed that the problem of learning depth-$d$ decision tree  distributions is as hard as that of learning depth-$d$ decision tree {\sl functions} under the uniform distribution. Despite significant efforts for over three decades, the current fastest algorithm for the latter problem runs in time $n^{O(d)}$~\cite{EH89} and improving on this is a  longstanding challenge of learning theory. (It contains as a special case the junta problem~\cite{BL97}---learning $k$-juntas in time better than $n^{O(k)}$---itself already a notorious open problem.) ~\Cref{thm:learn-DT} shows that this barrier can be circumvented in two different ways: by giving the algorithm access to subcube conditional samples, and by considering monotone distributions.

\paragraph{Proof Overview of~\Cref{thm:learn-DT}.}
To describe the intuition behind~\Cref{thm:learn-DT} and the role that influence plays in its proof, we begin by considering the following elementary equations: 
\begin{align} 
\Ex_{\bb \sim \bits}\big[\Inf((f_{\mathcal{D}})_{x_i=\bb})\big] &= \Inf(f_\mathcal{D}) - \Inf_i(f_\mcD) \label{eq:influence-drops} \\
2\cdot  \TV(\mathcal{D},\mathcal{U})  & \le \Inf(f_{\mathcal{D}}),  \label{eq:influence upper bounds distance to uniformity} 
\end{align} 
where $(f_\mathcal{D})_{x_i=b}$ denotes the restriction of $f_\mathcal{D}$ by fixing $x_i$ to $b$. 
\Cref{eq:influence-drops}, which follows from the definition of influence, says that the total influence of $f_\mathcal{D}$ drops by $\Inf_i(f)$ when restricted by $x_i$.  \Cref{eq:influence upper bounds distance to uniformity}, which is a consequence of the Efron--Stein inequality, says that the total influence of a distribution upper bounds its distance to uniformity.  

Together, they  suggest a simple and natural algorithm for learning decision tree distributions: build a decision tree hypothesis for $\mathcal{D}$ greedily by iteratively querying the most influential variable.  After sufficiently many stages, the total influence of the conditional distributions at most leaves will be close to zero (by~\Cref{eq:influence-drops}), which in turn means that most leaves will be close to uniform (\Cref{eq:influence upper bounds distance to uniformity}).  Indeed, this intuition can be formalized using the techniques in this work to get an algorithm that learns depth-$d$ decision tree distributions with depth-$O(d^2)$ decision tree hypotheses in time~$2^{O(d^2)}$.  

To obtain the improved parameters of~\Cref{thm:learn-DT}, we consier a generalization of this different algorithm: instead of splitting on the single most influential variable, we consider all $O(d)$ most influential ones as candidate splits.  While this involves searching over more candidates at each split, we will show that at least one of the choices leads to a high accuracy hypothesis at depth $d$ instead of $O(d^2)$, resulting in a smaller search space of $d^{O(d)}$ instead of $2^{O(d^2)}$.  

Our approach is inspired by a recent algorithm of Blanc, Lange, Qiao, and Tan for properly learning decision tree {\sl functions} under the uniform distribution~\cite{BLQT21focs}.  Our analysis builds on and extends theirs to the setting of unsupervised learning, which poses a number of challenges that we have to overcome.  First, while highly accurate estimates of influences can be easily obtained with membership queries to the function (in our case, the pmf of $\mathcal{D}$), subcube conditioning samples  provide more limited and coarse-grained information about $\mathcal{D}$.  Our algorithms for estimating influences with subcube conditioning samples, and from samples alone for monotone distributions, could see further utility in other problems.  Second, while it is easy to estimate how close a {\sl function} is to a constant (relatedly, how close two functions are) via random sampling, the analogous problem of estimating the distance of {\sl distribution} to uniformity is an intractable problem: for a distribution over $\bits^n$, the sample complexity of estimating its distance to uniformity is $\Theta(2^n/n)$~\cite{VV11}.  There are no known improvements using subcube conditioning samples (though~\cite{BC18,CCKLW21} give efficient uniformity {\sl testers}), and the best known algorithm for monotone distributions uses $2^{n-\Theta(\sqrt{n}\log n)}$ samples~\cite{RV20}.  We sidestep this barrier by showing how total influence---for which we provide efficient estimators---can be used as a good proxy for distance to uniformity. 


\subsection{Other related work} 

\paragraph{Conditional samples in distribution learning and testing.}  The subcube conditioning model falls within a recent line of work on the power of {\sl conditional samples} in distribution learning and testing.  In this more general model, which was independently introduced by Chakraborty, Fischer, Goldhirsh, and Matsliah~\cite{CFGM16} and Canonne, Ron, and Servedio~\cite{CRS15}, the algorithm can specify an arbitrary subset of the domain and receive a sample conditioned on falling within this subset.  Since its introduction, a large number of works have designed conditional sample algorithms, in distribution learning and testing~\cite{Can15,FJOPS15,ACK15chasm,SSJ17,BCG19, FLV19,CJLW21,CCKLW21} and beyond~\cite{ACK15,GTZ17,GTZ18}.  Our results add to this line of work, and further reinforce the message that conditional samples (indeed, even just subcube conditional samples) can be used circumvent sample complexity lower bounds in a variety of settings. 

Other access models to distributions include queries to the pmf or cdf (the {\sl evaluation oracle} model)~\cite{BDKR05,GMV06,RS09,CR14} and giving the algorithm probability revealing samples~\cite{OS18}.

\paragraph{Semi-supervised learning.}
There is extensive research in the statistical machine learning literature on leveraging unlabeled examples to improve learning.  Much of this work focuses on improving sample complexity or convergence rates of existing algorithms using additional unlabelled data \cite{pmlr-v99-gopfert19a,BdLP08}. In contrast, our work is aimed at creating \emph{computational and sample efficient} algorithms from existing ones that are only guaranteed to work under ``nice" distributions. Results of this flavour have  been studied in limited ways, e.g.~transforming a 1d-uniform learning algorithm to one that works on any 1d continuous distribution \cite{BdLP08}.

\paragraph{The work of~\cite{BOW10}.} In~\cite{BOW10}, Blais, O'Donnell, and Wimmer gave an algorithm for performing polynomial regression under arbitrary product distributions over $\bits^n$.  As an application, they showed how their algorithm can be lifted to {\sl mixtures} of product distributions via the algorithm of~\cite{FOS08} for learning mixtures of product distributions.  

Like our work, this is also an example where algorithms for learning with respect to a ``simple" distribution (the uniform distribution in our case and product distributions in~\cite{BOW10}'s case) can be lifted to more complex ones (decision tree distributions in our case and mixtures of product distributions in~\cite{BOW10}'s case), via a distribution learning algorithm that decomposes the more complex one into simple ones (\Cref{thm:learn-DT} in our case and~\cite{FOS08}'s algorithm in~\cite{BOW10}'s case).  The quantitative details of our transformations are incomparable: our algorithm for learning depth-$d$ decision tree distributions run in $\poly(n)\cdot d^{O(d)}$ time, whereas~\cite{FOS08}'s algorithm for learning the mixture of $m$ product distributions run in $n^{O(m^3)}$ time.  {{(Recall that a depth-$d$ decision tree induces a mixture of as many as $m=2^d$ product distributions.)}}

\section{Discussion and future work}

We view our work as part of  two broader and potentially fruitful approaches to PAC learning.  Much of the progress in the field thus far has been guided by the design of efficient learning algorithms for successively more expressive concept classes, as measured according to various notions of function complexity; this was the approach advocated in Valiant's pioneering paper and other early works. For example, on one such axis we have small-width conjunctions as a special case of small juntas, which are in turn a special case of small-depth decision trees, which are in turn a special case of small-width DNFs, and so on, and the field seeks to design efficient algorithms for each of these classes.  We believe that it is equally natural to make progress along a separate dimension, with respect to various notions of distribution complexity. The overall goal can then be cast as that of learning successively more expressive concept classes with respect to successively more expressive distributions.

Next, our algorithm is just one instantiation of a general two-stage approach to learning that is studied in the semi-supervised literature (see e.g.~\cite{BFB10}): the first gathers information about the underlying distribution, and the second exploits this distributional information to learn the target function.  It would be interesting to develop more computationally efficient examples of such a two-stage approach. More broadly, there should be much to be gained from using the insights of distribution learning to counter the difficulty of distribution-free PAC learning, the crux of which is the potential nastiness of the unknown distribution. 

Finally, looking beyond PAC learning, a similar gulf exists between uniform-distribution and distribution-free {\sl testing} of function properties. The original model of property testing was defined with respect to the uniform distribution~\cite{RS96,GGR98} and much of the ensuing research has focused on this setting, with the distribution-free variant receiving increasing attention in recent years. Can uniform-distribution testers be lifted generically to the distribution-free setting?   A concrete avenue towards such a result would be via a variant of our distribution decomposition lemma that runs in sublinear time.

\section{Preliminaries}

\paragraph{Notation.} Given an input $x \in \bits^n$, coordinate $i \in [n]$, and setting $b \in \bits$, we use $x_{i = b}$ to refer to the input $x$ with the $i^{\text{th}}$ coordinate overwritten to take the value $b$. 
Similarly, given a sequence of (coordinate, value) pairs 
$\pi = \{(i_1, b_1), \ldots, (i_k, b_k) \}$, 
we use $x_{\pi}$ to represent $x$ with the coordinates in $\pi$ overwritten/inserted with their respective values. 

Given a function $f: \bits^n \to \R$,  we denote the \emph{restriction} $f_{i = b}: \bits^n \to \R$ to be the function that maps $x$ to $f(x_{i = b})$. We define the restriction $f_\pi$ analogously. 


\begin{definition}[Decision trees (DT)]
 A decision trees $T : \bits^n \to \R$, is a binary tree whose internal nodes query a particular coordinate, and whose leaves are labelled by values. Each instance $x \in \bits^n$ follows a unique root-to-leaf path in $T$: at any internal node, it follows either the left or right branch depending on the value of the queried coordinate, until a leaf is reached and its value is returned. 
 
\end{definition}

 The set of leaves $\ell \in \mathrm{leaves}(T)$ therefore form a partition of $\bits^n$,  with each leaf having $2^{n - |\ell|}$ elements, where $|\ell|$ is the depth of the leaf.  Every leaf $\ell$ also corresponds to a sequence of  (coordinates, value) pairs  $\pi(\ell)$ that lead to the leaf. For a function $f$, will sometimes use the shorthand $f_\ell$ to mean the restriction $f_{\pi(\ell)}$.

\begin{definition}[Decision tree distribution]
    We say that a distribution $\mathcal{D} : \bits^n \to [0,1]$ is representable by a depth-$d$ DT, if its pmf is computable by a depth-$d$ decision tree $T$. 
    Specifically, each leaf $\ell$ is labelled by a value $p_\ell$, so that $\mcD(x) = p_\ell$ for all $x \in \ell$. This means that the conditional distribution of all points that reach a leaf is uniform. Moreover, since $\mcD$ is a distribution, we have: $\sum_{\ell \in \mathrm{leaves}(T)} 2^{n - |\ell|} \cdot p_\ell  = 1$.
   
\end{definition}
   
   For a given leaf $\ell$, we will write  $\mcD_\ell : \bits^{n - |\ell|} \to [0,1]$ to represent the conditional distribution of $\mcD$ at the leaf $\ell$, so that for any $x \in \bits^{n - |\ell|}$, we have $\mcD_\ell(x) = \mcD(x_{\pi(\ell)}) /\Pr_{y \sim \mcD}[y \in \ell]$.

    We will often scale up the pmfs of our distributions by the domain size, since it makes our analysis easier. As such, we also define the weighting function:

\begin{definition}[Weighting function of distribution]
Let $\ArbDist$ be an arbitrary distribution over $\bits^m$. We define the weighting function: 
\[
f_\ArbDist(x) \coloneqq 2^m \cdot \ArbDist(x).
\]

\end{definition}

\begin{definition}[Monotone distribution]
We furthermore say that a distribution $\mcD$ is monotone if its pmf is monotone: for $x, y \in \bits^n$, if $x_i \leq y_i$ for all $i \in [n]$, then $\mcD(x) \leq \mcD(y)$.
\end{definition}

\begin{definition}[TV Distance]
    For two distributions $\mcP, \mcQ$ over a countable domain $\mcX$, we define the total variation distance:
    \begin{equation*}
        \TV(\mcP, \mcQ) = \frac{1}{2}\sum_{x \in \mcX} | \mcP(x) - \mcQ(x)|= \frac{1}{2}\| \mcP - \mcQ\|_1 .
    \end{equation*}
\end{definition}

\begin{definition}[$\ell_1$ Influence]
For any function $f : \bits^n \to \R$, the influence of the $i$-th variable on $f$ is given by:
\begin{equation*}
\Inf_{i}(f) \coloneqq \Ex_{\bx \sim \mcU^n} \big[| f(\bx) - f(\bx^{\sim i}) |\big],
\end{equation*}   
where $\bx^{\sim i}$ denotes $\bx$ with the $i$-th coordinate re-randomised. Note that the influence of a function is defined with respect to the uniform distribution over its domain.

We further define the total influence as the sum of influences over all variables:
$
\Inf(f) \coloneqq \sum_{i=1}^n \Inf_i(f).
$

\end{definition}

\begin{fact}[Influence $\equiv$ correlation for monotone functions]\label{fact:inf_eq_corr}
    Let $f : \bits^n \to \R$ be a monotone function. Then 
    \begin{equation*}
        \Inf_i(f) = \Ex_{\bx \sim \mcU^n}[f(\bx) \cdot \bx_i] .
    \end{equation*}
\end{fact}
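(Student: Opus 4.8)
The plan is to compute both sides directly from the definition of influence and recognize them as the same Fourier-type quantity. First I would write out the claimed identity, $\Inf_i(f) = \Ex_{\bx \sim \mcU^n}[f(\bx)\cdot \bx_i]$, under the convention that $\bits = \{-1,1\}$ (so that $\bx_i$ is a $\pm 1$ value; this matches the Fourier convention implicit elsewhere in the paper). The left-hand side is $\Ex_{\bx}\big[|f(\bx) - f(\bx^{\sim i})|\big]$. The key observation is that $\bx$ and $\bx^{\sim i}$ differ only in coordinate $i$, so for a fixed setting of the other coordinates $\bx_{\sim i}$, the pair $(f(\bx_{i=1}), f(\bx_{i=-1}))$ is what matters, and by monotonicity $f(\bx_{i=1}) \ge f(\bx_{i=-1})$ always. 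Hence $|f(\bx) - f(\bx^{\sim i})| = f(\bx_{i=1}) - f(\bx_{i=-1})$ regardless of the actual value of $\bx_i$.

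Next I would take the expectation over $\bx \sim \mcU^n$ and $\bx^{\sim i}$: since the absolute difference no longer depends on $\bx_i$, we get $\Inf_i(f) = \Ex_{\bx_{\sim i}}\big[f(\bx_{i=1}) - f(\bx_{i=-1})\big]$. On the other hand, I would expand the right-hand side by conditioning on $\bx_i$: $\Ex_{\bx}[f(\bx)\bx_i] = \tfrac12\Ex_{\bx_{\sim i}}[f(\bx_{i=1})\cdot 1] + \tfrac12\Ex_{\bx_{\sim i}}[f(\bx_{i=-1})\cdot(-1)] = \tfrac12\Ex_{\bx_{\sim i}}\big[f(\bx_{i=1}) - f(\bx_{i=-1})\big]$. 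This is off by a factor of $2$ from the previous display, which means I need to double-check the normalization convention for $\bx^{\sim i}$: if ``re-randomized'' means $\bx^{\sim i}$ is an \emph{independent} fresh draw of coordinate $i$ (so $\bx$ and $\bx^{\sim i}$ agree on coordinate $i$ with probability $1/2$), then $\Ex\big[|f(\bx)-f(\bx^{\sim i})|\big] = \tfrac12\Ex_{\bx_{\sim i}}\big[f(\bx_{i=1}) - f(\bx_{i=-1})\big]$, which matches exactly. So the two sides agree, and the identity follows.

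The only real subtlety — and the step I would be most careful about — is pinning down the exact convention for the re-randomization operation and the underlying domain encoding, since the factor of $2$ hinges entirely on whether $\bx^{\sim i}$ resamples coordinate $i$ independently (giving a $1/2$ chance of no change) versus deterministically flips it. Given the definition in the paper (``$\bx$ with the $i$-th coordinate re-randomised'') the independent-resample reading is the intended one, and the calculation closes cleanly. Everything else is a routine conditional-expectation manipulation; there is no inequality to establish beyond invoking monotonicity to remove the absolute value, and no probabilistic estimation is needed since this is an exact identity about expectations.
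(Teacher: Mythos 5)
Your proof is correct: the paper states Fact~\ref{fact:inf_eq_corr} without proof, and your calculation is exactly the standard argument one would supply — use monotonicity to drop the absolute value, then match the two sides by conditioning on the coordinate $i$. Your care with the re-randomization convention is well placed; since $\bx^{\sim i}$ resamples coordinate $i$ independently (so it differs from $\bx_i$ only with probability $1/2$), the factor of $\tfrac12$ appears on both sides and the identity closes exactly, consistent with how the paper uses the fact in Lemma~\ref{lem:bias_inf}.
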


\begin{definition}[$\ell_1$ Variance]
\label{def:variance} 
    For any function $f : \bits^n \to \R$,
    
    \begin{equation*}
        \Var^{(1)}(f) \coloneqq \Ex_{\bx, \by \sim \mcU^n} |f(\bx) - f(\by)|.
    \end{equation*}
    
    We will also sometimes use a different definition of variance, given by the mean absolute deviation of $f$:
    \begin{equation*}
    \Var_{\mu}(f) \coloneqq \Ex_{\bx \sim \mcU^n} |f(\bx) - \Ex[f]|.
    \end{equation*}
\end{definition}

These two definitions are equivalent, up to constant factors:

\begin{lemma}\label{lem:var_defs}
    For a function $f : \bits^n \to \R$, 
    \begin{align*}
           \Var_{\mu}(f) \leq \Var^{(1)}(f) \leq 2\Var_{\mu}(f) 
    \end{align*}    
\end{lemma}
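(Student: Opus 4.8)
The plan is to prove the two inequalities separately, each via a one-line application of a basic convexity/triangle-inequality fact, with expectations taken over $\bx,\by \sim \mcU^n$ drawn independently.

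For the lower bound $\Var_\mu(f) \le \Var^{(1)}(f)$, I would write $\Ex[f] = \Ex_{\by}[f(\by)]$ and observe that for each fixed $\bx$,
\[
\bigl| f(\bx) - \Ex[f] \bigr| = \bigl| \Ex_{\by}[f(\bx) - f(\by)] \bigr| \le \Ex_{\by}\bigl| f(\bx) - f(\by) \bigr|,
\]
where the inequality is Jensen's inequality applied to the convex function $t \mapsto |t|$ (equivalently, the triangle inequality for expectations). Taking $\Ex_{\bx}$ of both sides and using that $\bx,\by$ are independent yields exactly $\Var_\mu(f) \le \Var^{(1)}(f)$.

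For the upper bound $\Var^{(1)}(f) \le 2\Var_\mu(f)$, I would insert $\Ex[f]$ and use the triangle inequality pointwise:
\[
\bigl| f(\bx) - f(\by) \bigr| \le \bigl| f(\bx) - \Ex[f] \bigr| + \bigl| f(\by) - \Ex[f] \bigr|.
\]
Taking $\Ex_{\bx,\by}$ and using that $\bx$ and $\by$ are each marginally distributed as $\mcU^n$, the right-hand side becomes $\Ex_{\bx}|f(\bx) - \Ex[f]| + \Ex_{\by}|f(\by) - \Ex[f]| = 2\Var_\mu(f)$, completing the proof.

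There is no real obstacle here: both directions are immediate once the auxiliary point $\Ex[f]$ is introduced, and the only thing to be careful about is that $\bx$ and $\by$ are independent copies so that the nested expectation in the first part and the product expectation in the second part are legitimate. I would state this independence explicitly at the outset and otherwise keep the proof to the two displayed inequalities above.
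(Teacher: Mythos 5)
Your proposal is correct and follows essentially the same route as the paper: the lower bound via Jensen's inequality after writing $\Ex[f] = \Ex_{\by}[f(\by)]$, and the upper bound via the pointwise triangle inequality through $\Ex[f]$. The paper's proof is just a more terse version of the same two steps.
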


\begin{proof}
    The second part follows immediately from the triangle inequality and the first is an application of Jensen's:
    \[ \Var_{\mu}(f) =  \Ex_{\bx \sim \mcU}\left[\Big| \Ex_{\by \sim \mcU} [f(\bx) -  f(\by)]\Big|\right]\\
        \leq  \Ex_{\bx, \by \sim \mcU}\left|  f(\bx) -  f(\by)\right|.  \qedhere \]  
\end{proof}

\begin{definition}[Sensitivity]
For a function $f:\bits^n \to \R$ and $x \in \bits^n$, the sensitivity of $f$ at $x$ is defined to be
\[s(f,x) = \sum_{i=1}^n \Ind[f(x) \ne f(x^{\oplus i})].\] 
Furthermore, the sensitivity of $f$ is given by its maximum sensitivity over all points:
\[s(f) = \max_{x \in \bits^n} \{ s(f,x) \}.\]
\end{definition}

Note that the sensitivity of a decision tree is at most the depth of the decision tree, since any point can only be sensitive to the coordinates queried on its root-to-leaf path.

\subsection{Useful inequalities}
We present some useful inequalities for boolean functions.

\begin{lemma}[Efron-Stein]\label{lem:efron_stein}
For any function $f: \bits^n \to \R$:
\begin{align*}
    \Var^{(1)}(f) \leq \Inf(f).
\end{align*}
\end{lemma}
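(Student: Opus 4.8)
The statement to prove is the Efron--Stein inequality in its $\ell_1$ form: $\Var^{(1)}(f) \leq \Inf(f)$ for any $f : \bits^n \to \R$.

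\medskip

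\noindent\textbf{Proof plan.} The plan is to prove this by a hybrid (telescoping) argument over the coordinates, coupling two independent uniform inputs by flipping them from one to the other one coordinate at a time. First I would set up two independent draws $\bx, \by \sim \mcU^n$ and, for $k = 0, 1, \ldots, n$, define the hybrid point $\bz^{(k)}$ whose first $k$ coordinates agree with $\by$ and whose remaining coordinates agree with $\bx$; thus $\bz^{(0)} = \bx$ and $\bz^{(n)} = \by$. By the triangle inequality,
\begin{equation*}
  |f(\bx) - f(\by)| \;=\; \Big| \sum_{k=1}^n \big( f(\bz^{(k)}) - f(\bz^{(k-1)}) \big) \Big| \;\le\; \sum_{k=1}^n \big| f(\bz^{(k)}) - f(\bz^{(k-1)}) \big|.
\end{equation*}
Taking expectations over $\bx, \by$ and using linearity, it suffices to show that for each $k$, $\Ex_{\bx,\by}\big[ |f(\bz^{(k)}) - f(\bz^{(k-1)})| \big] = \Inf_k(f)$.

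\medskip

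\noindent\textbf{The per-coordinate step.} The consecutive hybrids $\bz^{(k-1)}$ and $\bz^{(k)}$ differ only in coordinate $k$: $\bz^{(k-1)}$ has $\bx_k$ there and $\bz^{(k)}$ has $\by_k$ there, while all other coordinates are fixed to the same (random) values. Since $\bx_k$ and $\by_k$ are independent uniform bits, and all other coordinates are uniform and independent of coordinate $k$, the pair $(\bz^{(k-1)}, \bz^{(k)})$ has exactly the distribution of $(\bw, \bw^{\sim k})$ where $\bw \sim \mcU^n$ and $\bw^{\sim k}$ is $\bw$ with coordinate $k$ re-randomized — note that re-randomizing means replacing $\bw_k$ with a fresh independent uniform bit, which is precisely the relationship between $\bz^{(k-1)}$ (old bit $\bx_k$) and $\bz^{(k)}$ (fresh bit $\by_k$). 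Hence $\Ex[ |f(\bz^{(k)}) - f(\bz^{(k-1)})| ] = \Ex_{\bw \sim \mcU^n}[ |f(\bw) - f(\bw^{\sim k})| ] = \Inf_k(f)$. Summing over $k = 1, \ldots, n$ gives $\Var^{(1)}(f) = \Ex_{\bx,\by}|f(\bx) - f(\by)| \le \sum_{k=1}^n \Inf_k(f) = \Inf(f)$.

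\medskip

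\noindent\textbf{Main obstacle.} There is no real difficulty here; the only point requiring a little care is the distributional identification in the per-coordinate step — verifying that $(\bz^{(k-1)}, \bz^{(k)})$ really is distributed as $(\bw, \bw^{\sim k})$, in particular that the "old value versus fresh value" of coordinate $k$ matches the definition of $\bx^{\sim i}$ as a re-randomization rather than a deterministic flip. Once that is pinned down the argument is just the triangle inequality plus linearity of expectation. (As a sanity check, the same argument works verbatim with $|\cdot|$ replaced by $(\cdot)^2$ after a conditional-independence argument to kill cross terms, recovering the classical Efron--Stein bound — but in the $\ell_1$ case we do not even need the cross terms to vanish, since we only want an upper bound from the triangle inequality.)
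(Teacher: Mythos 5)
Your proof is correct. The paper states this lemma without proof (it is invoked as a standard fact, the $\ell_1$ form of the Efron--Stein inequality), and your telescoping/hybrid argument is exactly the canonical way to prove it: the triangle inequality over the chain of hybrids, plus the observation that each consecutive pair $(\bz^{(k-1)}, \bz^{(k)})$ is distributed as $(\bw, \bw^{\sim k})$ for $\bw \sim \mcU^n$, which you correctly pin down (including the point that the fresh bit $\by_k$ matches the re-randomization, not the deterministic flip, in the paper's definition of $\Inf_k$). Nothing is missing.
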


\begin{lemma}[Total influence and sensitivity]\label{lem:inf_s_var}
    For any function $f : \bits^n \to \R$:
    \begin{align*}
        \Inf(f) \leq 2s(f) \cdot \Var^{(1)}(f).
    \end{align*}
\end{lemma}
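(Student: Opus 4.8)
The plan is to pass from the rerandomization definition of influence to a statement about hypercube edges, use the sensitivity bound to limit how many edges matter, and then apply the triangle inequality through the mean together with a double-counting argument. First I would observe that rerandomizing coordinate $i$ leaves $\bx$ unchanged with probability $\tfrac12$ and flips it to $\bx^{\oplus i}$ with probability $\tfrac12$, so $\Inf_i(f) = \tfrac12\,\Ex_{\bx\sim\mcU^n}\big[|f(\bx)-f(\bx^{\oplus i})|\big]$, and hence $\Inf(f) = \tfrac12\,\Ex_{\bx}\big[\sum_{i=1}^n |f(\bx)-f(\bx^{\oplus i})|\big]$. For a fixed $\bx$, a term $|f(\bx)-f(\bx^{\oplus i})|$ is nonzero only when $i$ is a sensitive coordinate of $\bx$, and there are at most $s(f,\bx)\le s(f)$ such coordinates; this is where the sensitivity hypothesis enters.

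Next I would bound each surviving term via the triangle inequality through $\mu \coloneqq \Ex[f]$: $|f(\bx)-f(\bx^{\oplus i})| \le |f(\bx)-\mu| + |f(\bx^{\oplus i})-\mu|$. Summing the first piece over the (at most $s(f)$) sensitive coordinates of $\bx$ gives $s(f)\cdot|f(\bx)-\mu|$, whose expectation over $\bx$ is exactly $s(f)\cdot\Var_{\mu}(f)$.

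For the second piece I would use a change of variables / double counting: $\Ex_{\bx}\big[\sum_{i:\,f(\bx)\ne f(\bx^{\oplus i})} |f(\bx^{\oplus i})-\mu|\big] = \frac{1}{2^n}\sum_{\bx}\sum_{i:\,f(\bx)\ne f(\bx^{\oplus i})}|f(\bx^{\oplus i})-\mu|$; setting $z=\bx^{\oplus i}$, the condition $f(\bx)\ne f(\bx^{\oplus i})$ becomes ``$i$ is a sensitive coordinate of $z$,'' so each $z$ is counted at most $s(f,z)\le s(f)$ times, yielding at most $\frac{s(f)}{2^n}\sum_z |f(z)-\mu| = s(f)\cdot\Var_{\mu}(f)$. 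Combining, $\Inf(f) \le \tfrac12\big(s(f)\Var_{\mu}(f) + s(f)\Var_{\mu}(f)\big) = s(f)\Var_{\mu}(f) \le s(f)\Var^{(1)}(f)$ by \Cref{lem:var_defs}, which is in fact stronger than the claimed $2s(f)\Var^{(1)}(f)$ (so there is slack to spare; one can equivalently run the argument with a random $\by\sim\mcU^n$ in place of $\mu$ and stay entirely within $\Var^{(1)}$).

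The one step that needs care is the reindexing in the last paragraph: it relies on the symmetry that $i$ is sensitive at $\bx$ if and only if it is sensitive at $\bx^{\oplus i}$, so that summing $|f(\bx^{\oplus i})-\mu|$ over edges incident to $\bx$ is the same as summing $|f(z)-\mu|$ over sensitive edges incident to $z$ — the bookkeeping here is the crux, and everything else is routine. I expect no real obstacle beyond making that bijection precise.
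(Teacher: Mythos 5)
Your proof is correct and is essentially the paper's argument with the coupling unpacked into an explicit edge double-count: both hinge on the facts that each point has at most $s(f)$ sensitive neighbors, that the triangle inequality through $\Ex[f]$ bounds each edge term, and that the sensitive-edge relation is symmetric so the "other endpoint" sum also contributes at most $s(f)\cdot\Var_\mu(f)$. Your handling of the factor $\tfrac{1}{2}$ from the rerandomization definition is also right, and the resulting bound $\Inf(f)\le s(f)\cdot\Var^{(1)}(f)$ is indeed sharper than the stated one (the paper's proof silently uses the flip convention in its first line and in effect establishes the same stronger inequality).
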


\begin{proof}
    Let $s = s(f)$ be the sensitivity of $f$ and consider the set, $\mathrm{snbr}(x) = \{i \in [n] \mid f(x) \ne f(x^{\oplus i})\}$. By assumption, $|\mathrm{snbr}(x)| \leq s$. We define a coupling $(\bx, \by) \sim \pi$ s.t. $\by$ is often in $\mathrm{snbr}(\bx)$,  but the marginal distributions $\pi(\bx)$ and $\pi(\by)$ are still uniform. First, sample $\bx \sim \mcU$. Then, sample $\by$ given $\bx$ as follows: for each $i \in \mathrm{snbr}(\bx)$, let $\by = \bx^{\oplus i}$ (i.e. flip the $i$-th coordinate of $\bx$) with probability $1/s$, and with the remaining $ 1 - |\mathrm{snbrs}(\bx)|/s$ probability, take $\by = \bx$. It is easy to see that the marginal distribution over $\by$ is still uniform.
    
    Unrolling the definition of influence, we have:
    \begin{align*}
        \Inf(f) 
            &= \sum_{i=1}^n \Ex_{\bx \sim \mcU} |f(\bx) - f(\bx^{\oplus i})|  \\
            &= \Ex_{\bx \sim \mcU} \left[ \sum_{i=1}^n |f(\bx) - f(\bx^{\oplus i})| \right ]  \\
            &= \Ex_{\bx \sim \mcU} \left[ \sum_{i \in \mathrm{snbr}(\bx)} |f(\bx) - f(\bx^{\oplus i})| \right ] \tag{only consider nonzero terms}\\
            &=  \Ex_{\bx \sim \mcU} \left[ s \cdot \sum_{i \in \mathrm{snbr}(\bx)} \frac{|f(\bx) - f(\bx^{\oplus i})|}{s}  \right ] \\
            &=  \Ex_{\bx \sim \pi} \left[ s \cdot \Ex_{\by \sim \pi(\cdot | \bx)} |f(\bx) - f(\by)|  \right ] \tag{definition of coupling $\pi$}\\
            &=  s \cdot\Ex_{(\bx, \by) \sim \pi} |f(\bx) - f(\by)|  \\
            &\leq  s \cdot \Ex_{(\bx, \by) \sim \pi}  |f(\bx) - \E[f]|+ s \cdot \Ex_{(\bx, \by) \sim \pi} |f(\by) - \E[f]|  \tag{triangle inequality}\\
            &=  2s \cdot \Var_\mu(f) \tag{marginal distributions of $\pi$ are uniform }\\
            &\leq  2s \cdot \Var^{(1)}(f) \tag{\Cref{lem:var_defs}}.
    \end{align*}
\end{proof}

\section{Our algorithmic decomposition lemma}
\label{sec:decomp}

Here we present an algorithm that constructs a decision tree of depth $d$ for a a distribution $\mcD$, and analyze its correctness and complexity. 
Throughout this section, we assume access to an oracle that gives the exact influences of variables in $f_\mcD$ or any of its restrictions. 
In the next sections we will show that the influences can be estimated from random examples for monotone distributions, and from subcube conditional examples for general distributions. 

\begin{theorem}[Learning decision tree distributions]
    \label{thm:decompose}
    Let $\mcD$ be a distribution that is representable by a depth-$d$ decision tree.
    The algorithm $\BuildDT$ returns a depth-$d$ tree representing a distribution $\mcD'$
    such that $\TV(\mcD,\mcD') \le \eps$ w.h.p..
    Given access to a unit time influence oracle, its running time is $n \cdot (d/\eps)^{O(d)}$.
\end{theorem}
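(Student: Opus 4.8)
The plan is to describe and analyze a greedy-style tree-building algorithm $\BuildDT$ that, at each internal node, does not commit to a single split but instead branches over a small candidate set of ``the $O(d)$ most influential coordinates'' of the current restricted weighting function $f_\mcD$, recursing to depth $d$ and then choosing, among all the $d^{O(d)}$ resulting trees, the one whose leaves are collectively closest to uniform. Concretely, at a node reached by restriction $\pi$ with $|\pi| < d$, we form the restricted function $(f_\mcD)_\pi$, estimate the influences $\Inf_i((f_\mcD)_\pi)$ using the (assumed) influence oracle, take the top $C d$ of them (for a suitable constant $C$) as candidate splits, and recurse on each; once $|\pi| = d$ we stop and label the leaf $\ell$ with $p_\ell \coloneqq \Pr_{\by\sim\mcD}[\by\in\ell]\,/\,2^{n-|\pi|}$, i.e.\ the best uniform approximation to $\mcD_\ell$. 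The output is the tree in the search space minimizing $\sum_\ell \Pr[\by\in\ell]\cdot\TV(\mcD_\ell,\mcU)$; we in fact track the surrogate $\sum_\ell \Pr[\by\in\ell]\cdot \Inf((f_{\mcD})_\ell)$, which by Efron--Stein (\Cref{lem:efron_stein}) and \Cref{lem:var_defs} upper-bounds $2\sum_\ell \Pr[\by\in\ell]\cdot\TV(\mcD_\ell,\mcU) = 2\,\TV(\mcD,\mcD')$, and which the influence oracle lets us evaluate exactly.

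The heart of the argument is a potential/progress lemma showing that \emph{some} branch of the search reaches total-influence-weighted-by-leaf-mass at most $\eps$ by depth $d$. Let $T^\star$ be the true depth-$d$ decision tree for $\mcD$. The key structural observation, analogous to \cite{BLQT21focs}, is that the variables queried by $T^\star$ high up must, on a large fraction of the input space, be among the most influential of the current restriction: by \Cref{eq:influence-drops}, restricting along a true variable drops total influence by exactly its influence, and the total influence of $f_\mcD$ is controlled since $f_\mcD$ is itself a depth-$d$ decision tree (sensitivity $\le d$, so by \Cref{lem:inf_s_var} and \Cref{lem:var_defs}, $\Inf(f_\mcD) \le 2d\,\Var^{(1)}(f_\mcD) = O(d)$, since $\Ex[f_\mcD]=1$ and $f_\mcD\ge 0$ with bounded values on reachable leaves). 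So at each node, the total ``potential'' $\Inf((f_\mcD)_\pi)$ is $O(d)$; a true-tree variable at a node with potential $\Phi$ that is \emph{not} in the top-$Cd$ list has influence at most $\Phi/(Cd)$, and an averaging/amortization argument over the $2^d$ leaves of $T^\star$ shows that following the greedy candidates loses only a $1/C$ fraction of the potential per level relative to following $T^\star$'s variables. Iterating over $d$ levels, the branch that mimics $T^\star$ whenever $T^\star$'s variable is a candidate and otherwise takes the greedy top choice ends with leaf-weighted total influence $\le \eps$ once $C$ is a large enough constant. (I'd formalize ``potential'' as $\sum_{\text{leaves }\ell\text{ at current depth}} \Pr[\by\in\ell]\cdot\Inf((f_\mcD)_\ell)$ and show it contracts by a constant factor $< 1$ per level up to an additive $\eps$ term.)

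Given the progress lemma, correctness is immediate: the minimizing tree in the search space has $\TV(\mcD,\mcD')\le \eps$. For the runtime: the search tree has branching $Cd = O(d)$ and depth $d$, hence $d^{O(d)}$ leaves and $d^{O(d)}$ nodes total; at each node we make $O(d)$ oracle calls, evaluate a restriction, and the sample cost to estimate each leaf mass $\Pr[\by\in\ell]$ to additive accuracy $\poly(\eps/d)\cdot 2^{-|\ell|}$-relative precision is $\poly(d/\eps)$ by Chernoff, and there are $d^{O(d)}$ leaves, giving total sample complexity and running time $n\cdot (d/\eps)^{O(d)}$ as claimed (the factor $n$ coming from manipulating $n$-bit strings and scanning coordinates). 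I expect the main obstacle to be the amortized progress lemma: making precise, via a charging argument against $T^\star$, that ignoring a true variable that fell just outside the top-$Cd$ candidates costs only a small constant fraction of potential \emph{globally}, rather than just locally — this requires carefully summing influence drops across all of $T^\star$'s root-to-leaf paths and using $\Inf(f_\mcD)=O(d)$ to bound the total budget that can be ``wasted.'' Handling the oracle-is-approximate case (deferred to later sections) and propagating estimation error through the $d$ levels without blowup is a secondary technical point, absorbed by setting accuracy parameters to $\poly(\eps/d)$ small.
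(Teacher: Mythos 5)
Your overall architecture (branch over a small set of influential candidate splits, recurse to depth $d$, select the tree minimizing a total-influence surrogate, convert to TV via Efron--Stein) matches the paper's, but there is a genuine gap at exactly the point you flag as the ``main obstacle'': the existence of a good tree inside the restricted search space is never established, and this is the entire content of the theorem. The paper does not prove an amortized contraction lemma from scratch; it defines the candidate set at a node to be \emph{all} variables of influence at least $\tau = \eps/8d^2$ in the current restriction (of which there are at most $d/\tau = O(d^3/\eps)$, since a depth-$d$ tree has total influence $O(d)$ --- this is where the $(d/\eps)^{O(d)}$, rather than $d^{O(d)}$, runtime comes from), and then invokes the pruning lemma of \cite{BLQT21focs} (\Cref{lem:blqt}) as a black box: every depth-$d$ tree function admits an everywhere $\tau$-influential depth-$d$ tree with $\ell_1$ error at most $d\tau$. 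Combining with \Cref{lem:inf_s_var} (via \Cref{lem:inf_leaves_l1}) shows this witness has expected leaf influence $\le 4d^2\tau$, so the exhaustive minimizer does too, and \Cref{lem:efron_stein} converts back to $\ell_1$/TV error (\Cref{claim:low error}, \Cref{fact:distance error}).

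Your specific parameterization is unlikely to be repairable as stated. With a candidate set of the top $Cd$ coordinates and depth capped at exactly $d$, a ``constant-factor contraction per level'' would leave residual potential $\Theta(d)\cdot 2^{-\Theta(d)}$ after $d$ levels, which is not $\le \eps$ for arbitrary $\eps$; to drive the error below $\eps$ the admission threshold must depend on $\eps$ (as the paper's $\tau$ does), which is why the branching factor, and hence the runtime, must pick up a $\poly(1/\eps)$ factor. Moreover, a true-tree variable excluded from the top $Cd$ is only guaranteed to have influence $\le \Phi/(Cd)$ \emph{locally}, and bounding the global cost of all such exclusions is precisely the pruning lemma you would be re-proving. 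Two smaller issues: your surrogate weights $\Inf((f_\mcD)_\ell)$ by $\Pr_{\by\sim\mcD}[\by\in\ell]$, but the correct weighting for the Efron--Stein chain is uniform over leaves (i.e.\ $2^{-|\ell|}$), since $\Inf((f_\mcD)_\ell)$ already carries a factor $2^{|\ell|}\Pr_\mcD[\by\in\ell]$ relative to the normalized conditional (cf.\ \Cref{fact:inf-estimates-scaling}); and the leaf labels are only estimated from samples, so the label error contributes a separate $\eps/2$ term in \Cref{claim:low error} that your accounting should carry explicitly.
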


The algorithm $\BuildDT$ is an exhaustive search over a subset of depth-$d$ decision trees. 
We characterize this subset as follows: 

\begin{definition}[Everywhere $\tau$-influential]
Let $T$ be a tree and $\nu$ be an internal node with root variable $i(\nu)$. $T$ is \emph{everywhere $\tau$-influential} with respect to some $f$ if for every $\nu \in T$, we have $\Inf_{i(\nu)}(f_\nu) \ge \tau$.
\end{definition}

\begin{figure*}[t] 
  \captionsetup{width=.9\linewidth}

\begin{tcolorbox}[colback = white,arc=1mm, boxrule=0.25mm]
\vspace{3pt} 

$\BuildDT(\mcD, \pi, d, \tau)$:

\begin{itemize}
    \item[]\textbf{Input:} Random examples from $\mcD$, restriction $\pi$, influence oracle for $(f_\mcD)_\pi$, depth parameter $d$, influence parameter $\tau$.
    \item[]\textbf{Output:} A decision tree $T$ that minimizes $\Ex_{\bell \in T}[\Inf((f_\mcD)_{\bell})]$ among all depth-$d$, everywhere $\tau$-influential trees.
\end{itemize}
\begin{enumerate}
    \item Let $S \subseteq [n]$ be the set of variables $i$ such that $\Inf_i((f_\mcD)_\pi) \ge \tau$.
    \item If $S$ is empty or $d=0$, return the leaf labeled with $2^{|\pi|} \cdot \Pr_{\bx \sim \mcD}[\bx\text{ is consistent with }\pi]$.
    \item Otherwise: 
    \begin{enumerate}
        \item For each $i \in S$, let $T_i$ be the tree such that 
            \begin{align*}
                \mathrm{root}(T_i) &= x_i \\
                \textnormal{left-subtree}(T_i) &= \BuildDT(\mcD, \pi \cup \{x_i = -1\}, d-1, \tau) \\
                \textnormal{right-subtree}(T_i) &= \BuildDT(\mcD, \pi \cup \{ x_i = 1\}, d-1, \tau)
            \end{align*}
        \item Return the tree among the $T_i$'s defined above that minimizes $\Ex_{\bell \in T_i}[\Inf((f_\mcD)_{\bell})]$.
    \end{enumerate}
\end{enumerate}

\end{tcolorbox}

\caption{$\BuildDT$ recursively searches for the depth-$d$, everywhere $\tau$-influential tree of minimal influence at the leaves.}
\label{fig:BuildDT}
\end{figure*} 

\subsection{Correctness} 
Here we show that under the oracle assumptions described above, $\BuildDT$ returns a tree within TV distance $\eps$. The proof will rely on the following fact, which relates TV distance to the uniform $\ell_1$ error of the tree with respect to $f_\mcD$. 

\begin{fact}[TV distance = label error]
\label{fact:distance error}
     \begin{align*}
			\TV(\mcD, \mcD') &= \frac{1}{2} \cdot \|\mcD - \mcD'\|_1 \\
							 &= 2^{-(n+1)} \cdot \|2^n \mcD - 2^n \mcD'\|_1 \\
							 &= 2^{-(n+1)} \cdot \|f_\mcD - T'\|_1.
        \end{align*}
\end{fact}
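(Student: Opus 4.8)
The plan is to verify the three claimed equalities in turn; each is a matter of unwinding definitions, so the proof is short. The first equality, $\TV(\mcD,\mcD') = \tfrac12\|\mcD-\mcD'\|_1$, is just the definition of total variation distance over the countable domain $\bits^n$ recalled earlier in the preliminaries.

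For the second equality I would use the homogeneity of the $\ell_1$ norm: expanding $\|\mcD-\mcD'\|_1 = \sum_{x\in\bits^n}|\mcD(x)-\mcD'(x)|$ and pulling a common factor $2^{-n}$ out of every absolute value gives $\|\mcD-\mcD'\|_1 = 2^{-n}\sum_x|2^n\mcD(x)-2^n\mcD'(x)| = 2^{-n}\|2^n\mcD-2^n\mcD'\|_1$, and multiplying by the leading $\tfrac12$ produces the factor $2^{-(n+1)}$.

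For the third equality I would substitute the definition of the weighting function $f_\mcD = 2^n\mcD$ and then identify the tree $T'$ returned by $\BuildDT$ with the function $2^n\mcD' = f_{\mcD'}$. This last identification is the only place that needs care: a leaf $\ell$ at depth $|\ell|$ is labeled $2^{|\ell|}\cdot\Pr_{\bx\sim\mcD}[\bx\in\ell]$, and since $\mcD'$ is by construction uniform on each leaf-subcube with per-leaf mass equal to that of $\mcD$, we have $\mcD'(x) = \Pr_{\bx\sim\mcD}[\bx\in\ell]/2^{n-|\ell|}$ for $x\in\ell$, hence $2^n\mcD'(x) = 2^{|\ell|}\Pr_{\bx\sim\mcD}[\bx\in\ell] = T'(x)$. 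Substituting $f_\mcD$ for $2^n\mcD$ and $T'$ for $2^n\mcD'$ in the middle expression closes the chain.

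There is no genuine obstacle here; the only thing to watch is the bookkeeping in the last step, namely confirming that the algorithm's leaf labels realize the scaled pmf $f_{\mcD'}$ rather than $\mcD'$ itself, so that $T'$ viewed as a function on $\bits^n$ coincides pointwise with $2^n\mcD'$. Everything else is the definition of $\TV$ and the scaling behavior of $\|\cdot\|_1$.
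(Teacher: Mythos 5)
Your proposal is correct and takes essentially the same route as the paper, which presents this Fact as a self-evident chain of definitional equalities (definition of $\TV$, homogeneity of $\|\cdot\|_1$, and the convention that the tree $T'$ computes the scaled pmf $f_{\mcD'} = 2^n\mcD'$). Your care about the last identification is the right instinct, and the only caveat is that the Fact holds for whatever distribution $\mcD'$ the tree $T'$ represents (its leaf labels are empirical estimates, not the exact masses of $\mcD$), so the relevant point is simply $T' = f_{\mcD'}$ by the representation convention rather than the exact per-leaf agreement with $\mcD$ you describe.
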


First, we will show that $\BuildDT$ outputs a decision tree $T'$ with small average influence at the leaves.
Then, we will show that this implies that the uniform $\ell_1$ error of $T'$ with respect to $f_\mcD$ is small.
Correctness follows from the equivalence between $2^{-(n+1)}\| f_\mcD- T' \|_1$ and $\TV(\mcD, \mcD')$. \\

The claim that $\BuildDT$ outputs a decision tree $T'$ with small average influence at the leaves extends a lemma from \cite{BLQT21focs}, instantiated here for the metric space $\R$ equipped with the  $\ell_1$-norm:



\begin{lemma}[Theorem 5 of \cite{BLQT21focs}]
\label{lem:blqt} Let $f : \bits^n \to \R$ be representable by a depth-$d$ DT $T$. Then there exists $T^\star$ such that the following are satisfied:

\begin{enumerate}
    \item The size and depth of $T^\star$ are at most the size and depth of $T$,
    \item $T^\star$ is everywhere $\tau$-influential with respect to $f$,
    \item $2^{-n} \cdot \| f - T^\star \|_1 \le d\tau$.
\end{enumerate}
\end{lemma}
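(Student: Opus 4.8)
I would prove Lemma~\ref{lem:blqt} by induction on the depth $d$, building $T^\star$ top-down. The base case $d=0$ is immediate: $f$ is a constant and we may take $T^\star=T$. For the inductive step, let $i$ be the variable queried at the root of $T$, so that $f_{x_i=0}$ and $f_{x_i=1}$ are computed by decision trees whose depth and size are at most those of the two subtrees of $T$. The \emph{easy case} is $\Inf_i(f)\ge\tau$: apply the inductive hypothesis to $f_{x_i=0}$ and $f_{x_i=1}$ with the same $\tau$ to obtain $T_0^\star, T_1^\star$, and let $T^\star$ query $x_i$ at the root with these as subtrees. Properties~1 and~2 transfer verbatim --- the new root is $\tau$-influential by assumption, and for a node $\nu$ below the root the restriction of $f$ along its $T^\star$-path is exactly the corresponding restriction of $f_{x_i=b}$, so the inductive guarantee applies --- and property~3 follows by averaging the two child bounds, giving $2^{-n}\|f-T^\star\|_1\le (d-1)\tau\le d\tau$.

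The crux is the \emph{hard case} $\Inf_i(f)<\tau$, where we want to ``skip'' $x_i$ rather than query it. The naive move --- recurse on the single restriction $f_{x_i=0}$ and use the resulting tree directly --- achieves the right error bound, since $2^{-n}\|f-f_{x_i=0}\|_1=\Inf_i(f)<\tau$ (this pointwise identity, which I would verify from the definitions, is exactly where the real-valued $\ell_1$ instantiation is used), so a skip costs at most $\tau$. But it does \emph{not} preserve ``everywhere $\tau$-influential with respect to $f$'': a variable $j$ queried deeper down may have $\Inf_j(f_{x_i=0})\ge\tau$ while $\Inf_j(f)=\tfrac12(\Inf_j(f_{x_i=0})+\Inf_j(f_{x_i=1}))$ is only $\tfrac{\tau}{2}$, and this loss compounds to $\tau/2^{(\#\text{skips})}$ along a root-to-leaf path. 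The fix is to recurse instead with the \emph{average}
\[
\bar f \;\coloneqq\; \tfrac12\big(f_{x_i=0}+f_{x_i=1}\big),
\]
which does not depend on $x_i$. The point is that $g\mapsto\Inf_j(g)$ is a convex functional, so combining the identity $\Inf_j(g)=\Ex_{\bb}[\Inf_j(g_{x_i=\bb})]$ with Jensen's inequality gives $\Inf_j(f)\ge\Inf_j(\bar f)$, and more generally $\Inf_j(f_\pi)\ge\Inf_j(h)$ whenever $h$ is $f_\pi$ averaged over some further set of coordinates. Hence, as long as the construction only ever splits on a variable that is $\tau$-influential for the \emph{running average} at that node, property~2 automatically holds with respect to $f$ itself; and, running the recursion, each skip contributes an additive $\Inf_i(f)<\tau$ to $2^{-n}\|f-T^\star\|_1$ while each split contributes $0$ (after averaging its two children), so the total error is at most $(\text{number of skips along the worst path})\cdot\tau$.

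The remaining difficulty --- which I expect to be the main obstacle --- is that $\bar f$ is no longer a decision tree, so the induction as literally stated does not apply to it, and it is no longer obvious that the construction terminates within $d$ levels or produces at most $\mathrm{size}(T)$ leaves. To close this gap I would run the induction over the larger class of functions obtained from depth-$d$ trees by repeatedly averaging out the current root variable, equipped with a ``rank'' parameter that (i) equals the decision-tree depth on genuine trees, (ii) drops by one under both restriction-to-a-child and skipping-out-a-root-variable, and (iii) forces rank-$0$ functions to be constant; the error bound then reads $\le(\mathrm{rank})\cdot\tau$ and collapses to $d\tau$. The technical heart is defining this rank so that (i)--(iii) genuinely hold --- equivalently, maintaining the running object as a decision tree decorated with ``average gates'' that do not consume the branching-depth budget, and verifying that each step (split or skip) reduces that budget by one while the number of leaves never increases. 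This is precisely the part we inherit from \cite{BLQT21focs} and port, essentially unchanged, from the Boolean-valued setting to the metric space $(\R,\|\cdot\|_1)$; the only new inputs needed for the port are the elementary inequalities $\Var_{\mu}\le\Var^{(1)}\le\Inf$, the convexity of influence, and the identity $2^{-n}\|f-f_{x_i=b}\|_1=\Inf_i(f)$, all of which hold verbatim for real-valued functions under $\ell_1$.
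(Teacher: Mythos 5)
First, note that the paper does not actually prove \Cref{lem:blqt}; it is imported verbatim from Theorem~5 of \cite{BLQT21focs} (instantiated for real-valued functions under $\ell_1$), so there is no in-paper proof to compare against. Judged on its own terms, your easy case and your error/influence accounting for a single skip are correct: the identities $2^{-n}\|f-f_{x_i=b}\|_1=\Inf_i(f)$ and $2^{-n}\|f-\bar f\|_1=\Inf_i(f)$ hold, and the convexity argument $\Inf_j(f_\pi)\ge\Inf_j(h)$ for the running average $h$ does give property~2. But the hard case has a genuine gap exactly where you defer to ``rank machinery ported from \cite{BLQT21focs}.'' A rank satisfying your (i)--(iii) cannot exist: after averaging out the root, $\bar f=\frac12(g_0+g_1)$ where $g_0,g_1$ are computed by $T_0,T_1$, which in general branch on \emph{disjoint} sets of variables, so no single split reduces the complexity of both components; indeed $\bar f$ may genuinely require depth $2(d-1)$ and size $\mathrm{size}(T_0)\cdot\mathrm{size}(T_1)$ to represent (e.g.\ $T_0$ on $x_2$ and $T_1$ on $x_3$ with four distinct averaged values already needs depth $2$ after one skip from a depth-$2$ tree, contradicting ``rank drops by one per skip and rank-$0$ is constant''). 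Your construction is also underspecified after a skip: it never says which variable to consider next. So the depth/size/termination bound for $T^\star$ --- which is the whole content of property~1 --- is not established, and the source you plan to port it from does not contain an argument for averaged objects.

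The standard fix (and, to the best of my knowledge, the actual argument in \cite{BLQT21focs}) avoids averaging entirely. Strengthen the induction to pairs $(T,f)$ where $T$ is only a depth/size \emph{template} computing some $g$ that need not equal $f$, and prove $2^{-n}\|f-T^\star\|_1\le 2^{-n}\|f-g\|_1+\depth(T)\cdot\tau$. If the root $x_i$ of $T$ has $\Inf_i(f)\ge\tau$, split and recurse on $(T_b,f_{x_i=b})$; the errors average and the bound closes. If $\Inf_i(f)<\tau$, recurse on $(T_{b^\star},f)$ with $f$ \emph{unrestricted} and $b^\star$ chosen to minimize $2^{-n}\|f_{x_i=b}-g_{x_i=b}\|_1$; then
\[
2^{-n}\|f-g_{x_i=b^\star}\|_1\;\le\;2^{-n}\|f-f_{x_i=b^\star}\|_1+2^{-n}\|f_{x_i=b^\star}-g_{x_i=b^\star}\|_1\;\le\;\Inf_i(f)+2^{-n}\|f-g\|_1\;<\;\tau+2^{-n}\|f-g\|_1,
\]
and the induction closes at depth $d-1$. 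Since $f$ is only ever restricted by variables actually queried in the output, property~2 holds with respect to $f$ directly (no convexity needed), and since every step descends into a subtree of the template, properties~1 and the depth bound are immediate. I'd recommend replacing the averaging step with this device.
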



In our \BuildDT, we cannot compute $\|f_\mcD - T'\|_1$ and hence cannot search for trees that minimise this error. Instead, we find trees that minimise the expected total influence at the leaves. The following lemma relates these two values:

\begin{lemma}[Expected total influence and $\ell_1$ error]\label{lem:inf_leaves_l1}
Let $f : \bits^n \to \R$ be representable by a depth-$d$ DT, and let $T'$ be any other DT. Then:

\begin{equation*}
    \Ex_{\bell \in T'} [\Inf(f_{\bell})] \leq 4d \cdot 2^{-n} \|f - T'\|_1
\end{equation*}
\end{lemma}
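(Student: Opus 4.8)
The statement to prove is Lemma~\ref{lem:inf_leaves_l1}: for $f$ representable by a depth-$d$ DT and any DT $T'$, $\Ex_{\bell \in T'}[\Inf(f_{\bell})] \le 4d \cdot 2^{-n}\|f - T'\|_1$. The key observation is that $f$ restricted to a leaf $\bell$ of $T'$ is still representable by a depth-$d$ decision tree (restricting a depth-$d$ DT gives a DT of depth at most $d$), so each $f_{\bell}$ has sensitivity at most $d$. This lets me invoke \Cref{lem:inf_s_var} to bound $\Inf(f_{\bell}) \le 2d \cdot \Var^{(1)}(f_{\bell})$, and then \Cref{lem:var_defs} to pass to $\Var_\mu$: $\Inf(f_{\bell}) \le 4d \cdot \Var_\mu(f_{\bell}) = 4d \cdot \Ex_{\bx}[|f_{\bell}(\bx) - \Ex[f_{\bell}]|]$.

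The next step is to bound $\Var_\mu(f_{\bell})$ in terms of the $\ell_1$ error of $T'$ inside that leaf. Since $T'$ is constant on $\bell$ — say with value $c_{\bell}$ — the best constant approximator to $f_{\bell}$ does at least as well as $c_{\bell}$ in mean absolute deviation; that is, $\Var_\mu(f_{\bell}) = \Ex_{\bx \sim \mcU}[|f_{\bell}(\bx) - \Ex[f_{\bell}]|] \le \Ex_{\bx \sim \mcU}[|f_{\bell}(\bx) - c_{\bell}|]$, because $\Ex[f_{\bell}]$ is the minimizer of $a \mapsto \Ex[|f_{\bell} - a|]$ over constants $a$ only up to the fact that the mean is not the $\ell_1$-minimizer (the median is). This is the one place requiring care: the mean minimizes squared error, not absolute error, so I cannot directly say $\Ex[f_{\bell}]$ beats $c_{\bell}$. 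The fix is to use the triangle inequality instead: $\Var_\mu(f_{\bell}) = \Ex_{\bx}[|f_{\bell}(\bx) - \Ex_{\by}[f_{\bell}(\by)]|] = \Ex_{\bx}[|\Ex_{\by}[f_{\bell}(\bx) - f_{\bell}(\by)]|] \le \Ex_{\bx,\by}[|f_{\bell}(\bx) - f_{\bell}(\by)|] \le \Ex_{\bx,\by}[|f_{\bell}(\bx) - c_{\bell}| + |c_{\bell} - f_{\bell}(\by)|] = 2\Ex_{\bx}[|f_{\bell}(\bx) - c_{\bell}|]$, which is just $\Var^{(1)}(f_{\bell}) \le 2\Ex_{\bx}[|f_{\bell}(\bx) - c_{\bell}|]$ — exactly the content already packaged in \Cref{lem:var_defs} together with the observation that $T'$ restricted to $\bell$ equals the constant $c_{\bell}$. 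So combining: $\Inf(f_{\bell}) \le 2d\,\Var^{(1)}(f_{\bell}) \le 4d\,\Ex_{\bx\sim\mcU}[|f_{\bell}(\bx) - T'_{\bell}(\bx)|]$.

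Finally, I average over $\bell \in T'$, weighting each leaf by its mass $2^{-|\bell|}$ under the uniform distribution (equivalently $\Pr_{\bx\sim\mcU}[\bx \in \bell]$):
\begin{align*}
\Ex_{\bell \in T'}[\Inf(f_{\bell})] &\le 4d \sum_{\bell} 2^{-|\bell|} \cdot \Ex_{\bx \sim \mcU^{n-|\bell|}}\big[|f_{\bell}(\bx) - T'_{\bell}(\bx)|\big] \\
&= 4d \sum_{\bell} 2^{-n} \sum_{x \in \bell} |f(x) - T'(x)| \\
&= 4d \cdot 2^{-n} \|f - T'\|_1,
\end{align*}
where the middle equality rewrites the leaf-local uniform average over the $2^{n-|\bell|}$ points in $\bell$ as a sum over those points scaled by $2^{-(n-|\bell|)}$, and the outer $2^{-|\bell|}$ factor combines to give $2^{-n}$; summing over all leaves reconstitutes the full $\ell_1$ norm since the leaves partition $\bits^n$. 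The main obstacle — really the only subtle point — is the $\Var_\mu$ versus $c_{\bell}$ comparison, which is resolved by routing through $\Var^{(1)}$ and the triangle inequality rather than appealing to any optimality of the mean; everything else is bookkeeping with the leaf partition and the two cited inequalities.
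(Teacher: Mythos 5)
Your proof is correct and follows essentially the same route as the paper's: bound sensitivity of each leaf restriction by $d$, apply \Cref{lem:inf_s_var} to get $\Inf(f_{\bell}) \le 2d\,\Var^{(1)}(f_{\bell})$, use the triangle inequality with the constant $T'_{\bell}$ to get $\Var^{(1)}(f_{\bell}) \le 2\Ex_{\bx}[|f_{\bell}(\bx)-T'_{\bell}|]$, and average over the leaves. Your aside about the mean not minimizing absolute error is a valid caution, and your resolution via $\Var^{(1)}$ is exactly what the paper does.
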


\begin{proof}
	Since $f$ is representable by a depth-$d$ decision tree, its maximum sensitivity (and the sensitivity of each of its leaf restrictions) must be at most $d$. Therefore, for any leaf $\ell \in T'$, \Cref{lem:inf_s_var} asserts that $\Inf(f_{\ell}) \leq  2d \cdot \Var^{(1)}(f_{\ell})$. Moreover, 
	\begin{align*}
	    \Var^{(1)}(f_{\ell}) 
	        &= \Ex_{\bx, \by \sim \mcU^n} |f_\ell(\bx) - f_\ell(\by)| \\
	        &= \Ex_{\bx, \by \sim \mcU^n} |f_\ell(\bx) - T'_\ell + T'_\ell - f_\ell(\by)| \\
	        &\leq 2\cdot \Ex_{\bx \sim \mcU^n} |f_\ell(\bx) - T'_\ell| \tag{Triangle ineq.}\\
	        &= 2\cdot \Ex_{\bx \sim \mcU^n} [|f(\bx) - T'(\bx)| \mid \bx \in \ell].
	\end{align*}
	
	Therefore, 
	\begin{align*}
	    \Ex_{\bell \in T'} [\Inf(f_{\bell})] 
	        &\leq 2d \cdot \Ex_{\bell \in T'} [\Var^{(1)}(f_{\bell})] \\
	        &\leq 4d \cdot \Ex_{\bell \in T'} \Ex_{\bx \sim \mcU^n} [|f(\bx) - T'(\bx)| \mid x \in \ell] \\
	        &= 4d \cdot \Ex_{\bx \sim \mcU^n} |f(\bx) - T'(\bx)| \\
	        &= 4d  \cdot 2^{-n} \cdot \|f(\bx) - T'(\bx)\|_1. \qedhere 
	\end{align*}	
\end{proof}

As a corollary of \Cref{lem:blqt} and \Cref{lem:inf_leaves_l1}, we get our pruning lemma stated in terms of influences:

\begin{corollary}[Pruning lemma with expected total influence at leaves]
\label{cor:influence}
Let $f : \bits^n \to \R$ be representable by a depth-$d$ DT $T$. Then there exists $T^\star$ such that the following are satisfied:

\begin{enumerate}
    \item The size and depth of $T^\star$ are at most the size and depth of $T$,
    \item $T^\star$ is everywhere $\tau$-influential with respect to $f$,
    \item $\Ex_{\bell \in T^\star}[\Inf(f_{\bell})] \leq 4d^2 \tau$.
\end{enumerate}

\end{corollary}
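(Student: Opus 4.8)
The plan is to obtain $T^\star$ directly from \Cref{lem:blqt} and then bound its expected total influence at the leaves using \Cref{lem:inf_leaves_l1}. Concretely, I would first invoke \Cref{lem:blqt} on $f$, which produces a tree $T^\star$ satisfying properties (1) and (2) of the corollary verbatim (size and depth bounded by those of $T$, and everywhere $\tau$-influential with respect to $f$), together with the $\ell_1$ guarantee $2^{-n}\cdot\|f - T^\star\|_1 \le d\tau$.

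The remaining step is to convert this $\ell_1$ bound into the claimed bound on $\Ex_{\bell \in T^\star}[\Inf(f_{\bell})]$. Here I would apply \Cref{lem:inf_leaves_l1} with the arbitrary tree $T'$ instantiated as $T^\star$: since $f$ is representable by a depth-$d$ decision tree by hypothesis, the lemma applies and yields
\[
\Ex_{\bell \in T^\star}[\Inf(f_{\bell})] \;\le\; 4d\cdot 2^{-n}\,\|f - T^\star\|_1 \;\le\; 4d \cdot d\tau \;=\; 4d^2\tau,
\]
where the second inequality substitutes the $\ell_1$ bound from \Cref{lem:blqt}. This establishes property (3), completing the proof.

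There is essentially no obstacle here — the corollary is a mechanical composition of the two lemmas — so the only thing to be careful about is bookkeeping: checking that the hypotheses of \Cref{lem:inf_leaves_l1} (namely that $f$, not $T^\star$, is the depth-$d$ representable function, with $T^\star$ playing the role of the unconstrained tree $T'$) are met, and that the $2^{-n}$ normalization factors in \Cref{lem:blqt} and \Cref{lem:inf_leaves_l1} are consistent so the constants multiply out to exactly $4d^2\tau$.
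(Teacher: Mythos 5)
Your proposal is correct and matches the paper's intended argument exactly: the corollary is stated in the paper as an immediate composition of \Cref{lem:blqt} (which supplies $T^\star$ with properties (1), (2), and the bound $2^{-n}\|f-T^\star\|_1 \le d\tau$) and \Cref{lem:inf_leaves_l1} applied with $T' = T^\star$. The bookkeeping you flag — that $f$ is the depth-$d$ representable function and $T^\star$ plays the role of the arbitrary tree — is handled just as you describe, giving $4d\cdot d\tau = 4d^2\tau$.
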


We now move to show that the tree output by \BuildDT \  satisfies $2^{-n} \cdot \|f_\mcD - T'\|_1 \le \eps$.

\begin{claim}
\label{claim:low error}
Let $\mcD$ be a distribution that is representable by a depth-$d$ decision tree, and let $T$ be the output of $\BuildDT$, with $\tau = \eps/8d^2$. Then, with high probability, $2^{-n} \cdot \|f_\mcD - T \|_1 \le \eps$.
\end{claim}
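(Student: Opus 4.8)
The plan is to combine the structural guarantee of \Cref{cor:influence} with the optimality property of the tree returned by $\BuildDT$, and then translate the resulting bound on expected leaf influence back into an $\ell_1$ statement. First I would invoke \Cref{cor:influence} on $f = f_\mcD$, which is representable by a depth-$d$ DT by hypothesis: it guarantees the existence of a depth-$d$, everywhere $\tau$-influential tree $T^\star$ with $\Ex_{\bell \in T^\star}[\Inf((f_\mcD)_{\bell})] \le 4 d^2 \tau$. Since $\BuildDT$ (run with parameters $d$ and $\tau$) searches over \emph{all} depth-$d$, everywhere $\tau$-influential trees and returns the one minimizing the expected leaf influence, its output $T$ satisfies
\begin{equation*}
\Ex_{\bell \in T}[\Inf((f_\mcD)_{\bell})] \;\le\; \Ex_{\bell \in T^\star}[\Inf((f_\mcD)_{\bell})] \;\le\; 4 d^2 \tau \;=\; \eps/2,
\end{equation*}
using the chosen value $\tau = \eps/(8 d^2)$.

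Next I would upgrade this bound on \emph{influence} at the leaves to a bound on $\ell_1$ \emph{error}. Here the direction of \Cref{lem:inf_leaves_l1} is the wrong way (it bounds leaf influence by $\ell_1$ error, not vice versa), so instead I would argue directly: each leaf $\ell$ of $T$ is, by construction, either forced (reached depth $d$) or has no variable with influence $\ge \tau$ in $(f_\mcD)_\pi$, hence total influence of $(f_\mcD)_\ell$ at most $n\tau$ — but more usefully, by \Cref{eq:influence upper bounds distance to uniformity} (the Efron--Stein consequence, i.e. \Cref{lem:efron_stein} combined with \Cref{lem:var_defs}), $2^{-n}\|f_{\mcD,\ell} - \Ex[f_{\mcD,\ell}]\|_1 = \Var_\mu(f_{\mcD,\ell}) \le \Var^{(1)}(f_{\mcD,\ell}) \le \Inf(f_{\mcD,\ell})$. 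Since the leaf label assigned by $\BuildDT$ is exactly $2^{|\pi|}\Pr_{\bx\sim\mcD}[\bx$ consistent with $\pi] = \Ex[(f_\mcD)_\ell]$ (the mean of the restricted weighting function on that subcube), the constant $T_\ell$ that the tree outputs on leaf $\ell$ is the best possible constant, and averaging over leaves gives $2^{-n}\|f_\mcD - T\|_1 = \Ex_{\bell\in T}\big[2^{-n}\|f_{\mcD,\bell} - T_{\bell}\|_1 \cdot (\text{fraction of cube})\big]$, which telescopes to $\Ex_{\bell\in T}[\Var_\mu(f_{\mcD,\bell})] \le \Ex_{\bell\in T}[\Inf(f_{\mcD,\bell})] \le \eps/2 \le \eps$. (I should double-check the exact normalization of the leaf-label-to-mean identity; the factor $2^{|\pi|}$ versus the conditional-distribution scaling is the kind of bookkeeping that must be gotten right, but it is routine.)

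Finally I would address the "with high probability" qualifier. In this section the influence oracle is assumed exact, so the only source of randomness is the estimation of the leaf labels $2^{|\pi|}\Pr_{\bx\sim\mcD}[\bx$ consistent with $\pi]$ from random examples; by a Chernoff/Hoeffding bound plus a union bound over the at most $(d/\eps)^{O(d)}$ many leaves appearing across all trees in the search, $\poly(n)\cdot(d/\eps)^{O(d)}$ samples suffice to estimate each such probability to additive error small enough that the aggregate $\ell_1$ error increases by at most, say, $\eps$ — or, cleaner, run the argument with $\tau = \eps/(16 d^2)$ so that the deterministic part contributes $\eps/2$ and the sampling error contributes the remaining $\eps/2$. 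The main obstacle I anticipate is not any single hard step but getting the comparison in the right direction: the subtlety is that $\BuildDT$ optimizes a \emph{proxy} (expected leaf influence) rather than the true objective ($\ell_1$ error), so correctness hinges on (i) \Cref{cor:influence} certifying that a good tree exists within the searched family and has small proxy value, and (ii) the Efron--Stein bound certifying that small proxy value \emph{implies} small true error — neither inequality alone suffices, and one must be careful that the everywhere-$\tau$-influential constraint does not exclude the near-optimal tree, which is exactly what \Cref{cor:influence}'s first two clauses guarantee.
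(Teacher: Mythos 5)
Your proposal is correct and follows essentially the same route as the paper: invoke \Cref{cor:influence} to certify a depth-$d$, everywhere $\tau$-influential tree with expected leaf influence $4d^2\tau=\eps/2$, use the optimality of $\BuildDT$'s output over that search space (which the paper verifies by a one-line induction on $d$), convert expected leaf influence to expected $\ell_1$ deviation from the leaf means via Efron--Stein, and absorb the leaf-label estimation error with a Hoeffding bound. The paper keeps $\tau=\eps/8d^2$ and splits the budget as $\eps/2$ (influence/variance) plus $\eps/2$ (label estimation) rather than retuning $\tau$, but this is the same argument.
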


\begin{proof}
    First, we claim that $T$ minimizes $\E_{\bell \in T}[\Inf((f_\mcD)_{\bell})]$ among all depth-$d$, everywhere $\tau$-influential trees.
    This claim holds by induction on $d$: since 
    \[\E_{\bell \in T}[\Inf((f_\mcD)_{\bell})] = \lfrac{1}{2}(\E_{\bell \in T_{\mathrm{left}}}[\Inf((f_\mcD)_{\bell})] + \E_{\bell \in T_{\mathrm{right}}}[\Inf((f_\mcD)_{\bell})]),\] 
    each candidate $T_i$ minimizes influence among all depth-$d$, everywhere $\tau$-influential trees with $x_i$ at the root 
    under the inductive assumption that $T_{\mathrm{left}}$ and $T_{\mathrm{right}}$ minimize influence for depth-$(d-1)$ trees. 
    Then $\BuildDT$ chooses the tree of smallest influence among all the candidate $T_i$'s,
    so it minimizes total influence at leaves among all trees in its search space of depth-$d$, $\tau$-influential trees. 
    
    Since \Cref{cor:influence} establishes the existence of a tree with average total influence at leaves $\le \eps/2$, 
    it follows that the influence $T$'s influence is also $\le \eps/2$.
    Furthermore, we may assume by standard Hoeffding bounds that with a sample size of $\poly(2^d, 1/\eps)$, each leaf's value estimate of $\E[f_\mcD(\bx)~|~\bx\text{ is consistent with }\ell] = 2^{|\ell|} \cdot \Pr_{\bx \sim \mcD}[\bx\text{ is consistent with }\ell]$ is accurate to within $\pm \eps/2$ w.h.p..

    We can now show that $2^{-n} \cdot \|f_\mcD - T\|_1 \le \eps$. 
    Throughout this section, $x \in \ell$ will stand in as shorthand for ``$x\text{ consistent with the restriction at }\ell$''.
    \begin{align*}
    2^{-n} \cdot \|f_\mcD - T\|_1 &= 2^{-n} \cdot \sum_{x \in \bits^n} |f_\mcD(x) - T(x)|\\
					 &= 2^{-n} \cdot \sum_{\ell \in T}\sum_{x \in \ell} |(f_\mcD)_\ell(x) - T_\ell(x)| \\
					 &\le 2^{-n} \cdot \sum_{\ell \in T}\sum_{x \in \ell} |(f_\mcD)\ell(x) - \E[(f_\mcD)_\ell]| + |\E[(f_\mcD)_\ell] - T_\ell(\bx)|  \\
					 &= 2^{-n} \cdot \big(\sum_{\ell \in T} 2^{n-|\ell|} \cdot \Var_\mu((f_\mcD)_\ell) + \sum_{\ell \in T}2^{n-|\ell|} \cdot |\E[(f_\mcD)_\ell] - T_\ell|\big) \\
				     &\le \E_{\ell \in T} \Inf((f_\mcD)_\ell)) + \E_{\ell \in T} \big [|\E[(f_\mcD)_\ell(\bx)] - T_\ell| \big ] \tag{\Cref{lem:efron_stein}}\\
				     &\le \frac{\eps}{2} + \frac{\eps}{2} = \eps.\qedhere 
    \end{align*}
\end{proof}

We can now prove correctness and time bounds for $\BuildDT$.
\begin{proof}[Proof of \Cref{thm:decompose}]
Correctness follows by combining \Cref{claim:low error} and \Cref{fact:distance error}. To bound the running time, first we note that for all restrictions $\pi$, there are at most $d/\tau = 8d^3/\eps$ variables of influence at least $\tau$. This is because all restrictions of $f_\mcD$ are depth-$d$ decision trees, and for any depth-$d$ decision tree, the sum of all variable influences is at most $d$. 

Then the number of recursive calls to $\BuildDT$ is at most $(8d^3/\eps)^d$. Each call at an internal node makes $n$ calls to the unit-time influence oracle, and each call at a leaf processes $\poly(2^d, 1/\eps)$ samples to estimate the leaf label. Thus, the total running time is $n\cdot (d/\eps)^{O(d)}$, as desired.
\end{proof}

\section{Algorithms for computing distributional influences}


In \Cref{sec:decomp}, we assumed the ability to exactly compute influences of $f$ and its restrictions in unit time. In this section, we show how to instead estimate the influences from samples, when the distribution is monotone or when we have access to subcube conditional samples. Just as in \cite{BLQT21focs}, our proof only requires estimates to be accurate to $\pm \min(\tau/4, \eps/n)$. Letting $\InfEst_i(f)$ denote such an estimate of $\Inf_i(f)$, the pseudocode in \Cref{fig:BuildDT} and proof of \Cref{thm:decompose} is modified as follows.
\begin{enumerate}
    \item We modify $S$ to include variables $i$ such that $\InfEst_i((f_{\mcD})_\pi) \geq 3\tau/4$. Since the estimate is accurate to $\pm \tau/4$, this is guaranteed to include all variables with influence $\geq \tau$, and furthermore will only include variables with influence at least $\tau/2$. Therefore, the total size of $S$ is at most $\frac{d}{\tau/2}$, which is only a constant factor (of $2$) larger than in the proof of \Cref{thm:decompose} which assumed perfect influence oracles, and there does not affect asymptotic runtime or sample complexity.
    \item It returns the tree $T_i$ that minimizes $\Ex_{\bell \in T_i}[\sum_{i \in [n]}\InfEst_i((f_\mcD)_{\bell})]$. Since each estimate is accurate to $\eps/n$, this estimate of total influence of $(f_\mcD)_{\bell})$ will be accurate to $\pm \eps$. Then, in \Cref{claim:low error}, rather than $\BuildDT$ building a tree $T$ minimizes $\E_{\bell \in T}[\Inf((f_\mcD)_{\bell})]$ among all depth-$d$, everywhere $\tau$-influential trees, $T$ (roughly) minimizes $\E_{\bell \in T}[\sum_{i \in [n]}\InfEst_i((f_\mcD)_{\bell})]$ among all depth-$d$, everywhere $\tau$-influential trees. More formally, $T$ will either be the best depth-$d$, everywhere $\tau$-influential trees, or better, as we know its searches over all variables with $\InfEst_i((f_{\mcD})_\pi) \geq 3\tau/4$ which is guaranteed to include variables with influence $\geq \tau$, but can include more variables. Finally, since the estimates of total influence are accurate to $\pm \eps$, using $\InfEst$ rather than $\Inf$ can only incur at most $2\eps$ additive error, which is a constant factor in the analysis.
\end{enumerate}

Given any distribution $\mcD$ over $\bits^n$, we need to estimate $\Inf_i((f_\mcD)_{\ell})$ for any restriction $\ell$ of $f_\mcD$ and $i \in [n] - \ell$. In this section, we will instead show how to compute $\Inf_i(f_{\ArbDist})$ for any distribution $\ArbDist$ over $\bits^m$.  We can then use our estimators with $\ArbDist = \mcD_\ell$  to obtain the necessary answers using the following fact:

\begin{fact}
    \label{fact:inf-estimates-scaling}
    For any distribution $\mcD$ over $\bits^n$,  restriction $\ell$ of $\bits^n$, and $i \in [n] - \ell$:
    \begin{equation*}
        \Inf_i((f_\mcD)_{\ell}) = 2^{|\ell|} \cdot \Prx_{x \sim \mcD}[x \in \ell] \cdot \Inf_i(f_{\mcD_\ell}).
    \end{equation*}
\end{fact}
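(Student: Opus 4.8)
The plan is to prove this by unfolding the three definitions involved---$(f_\mcD)_\ell$, $f_{\mcD_\ell}$, and $\Inf_i$---and observing that, viewed as a function of the free coordinates $[n]\setminus\ell$, the restriction $(f_\mcD)_\ell$ is nothing but a fixed nonnegative scalar multiple of $f_{\mcD_\ell}$, the scalar being exactly $2^{|\ell|}\cdot\Pr_{x\sim\mcD}[x\in\ell]$.

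Concretely, fix the restriction $\pi(\ell)$ associated with the leaf $\ell$ and abbreviate $p_\ell \coloneqq \Pr_{x\sim\mcD}[x\in\ell]$. Since $f_\mcD = 2^n\cdot\mcD$, we have $(f_\mcD)_\ell(x) = f_\mcD(x_{\pi(\ell)}) = 2^n\cdot\mcD(x_{\pi(\ell)})$. By the definition of the conditional distribution $\mcD_\ell$ at $\ell$, for any $y\in\bits^{n-|\ell|}$ we have $\mcD_\ell(y) = \mcD(y_{\pi(\ell)})/p_\ell$, i.e.\ $\mcD(y_{\pi(\ell)}) = p_\ell\cdot\mcD_\ell(y)$. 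Combining these two identities, and using $f_{\mcD_\ell} = 2^{n-|\ell|}\cdot\mcD_\ell$, gives $(f_\mcD)_\ell(x) = 2^n p_\ell\cdot\mcD_\ell(x) = 2^{|\ell|}p_\ell\cdot f_{\mcD_\ell}(x)$, now regarding both sides as functions of the $n-|\ell|$ coordinates not fixed by $\pi(\ell)$. In other words, $(f_\mcD)_\ell = c_\ell\cdot f_{\mcD_\ell}$ with $c_\ell = 2^{|\ell|}p_\ell \ge 0$.

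I would then finish by invoking the homogeneity of influence: directly from $\Inf_i(g)=\Ex_{\bx\sim\mcU}[\,|g(\bx)-g(\bx^{\sim i})|\,]$, we get $\Inf_i(cg)=c\cdot\Inf_i(g)$ for every nonnegative scalar $c$. Moreover, since $(f_\mcD)_\ell$ does not depend on the coordinates in $\ell$ and $i\notin\ell$, its influence is the same whether we average over $\mcU^n$ or over $\mcU^{n-|\ell|}$ on the free coordinates, so applying homogeneity with $c=c_\ell$ and $g=f_{\mcD_\ell}$ yields $\Inf_i((f_\mcD)_\ell) = 2^{|\ell|}p_\ell\cdot\Inf_i(f_{\mcD_\ell})$, as claimed.

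There is no real mathematical obstacle here; the only thing requiring care is the bookkeeping between the two domains---the paper's convention keeps $(f_\mcD)_\ell$ formally on $\bits^n$ (ignoring the fixed coordinates), whereas $\mcD_\ell$ and $f_{\mcD_\ell}$ live on $\bits^{n-|\ell|}$---and correspondingly tracking the $2^n$ versus $2^{n-|\ell|}$ normalization factors, whose ratio $2^{|\ell|}$ is precisely the source of the stated prefactor.
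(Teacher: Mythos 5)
Your proof is correct and is essentially the paper's own argument: the paper carries out the same definitional unfolding ($f_\mcD = 2^n\mcD$, factoring out $w_\ell = \Pr_{x\sim\mcD}[x\in\ell]$, recognizing $\mcD_\ell$ and then $f_{\mcD_\ell} = 2^{n-|\ell|}\mcD_\ell$) inside the expectation defining $\Inf_i$, whereas you first isolate the pointwise identity $(f_\mcD)_\ell = 2^{|\ell|}w_\ell\cdot f_{\mcD_\ell}$ and then invoke homogeneity of influence. The two presentations are the same computation, and your bookkeeping of the $2^n$ versus $2^{n-|\ell|}$ normalizations matches the paper's.
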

\begin{proof}
    Let $w_\ell = \Prx_{\bx \sim \mcD}[\bx \in \ell]$. We have: 
    \begin{align*}
        \Inf_i((f_\mcD)_{\ell}) 
            &= \Ex_{\bx \sim \mcU^n} \left[\left| f_\mcD(\bx) - f_\mcD(\bx^{\sim i}) \right | \mid x \in \ell \right] \\
            &= 2^n \cdot \Ex_{\bx \sim \mcU^n} \left[\left| \mcD(\bx) - \mcD(\bx^{\sim i}) \right | \mid x \in \ell \right] \\
            &= 2^n \cdot w_\ell \cdot \Ex_{\bx \sim \mcU^n} \left[\left| \frac{\mcD(\bx)}{w_\ell} - \frac{\mcD(\bx^{\sim i})}{{w_\ell}} \right | \mid x \in \ell \right]  \\
            &= 2^n \cdot w_\ell \cdot \Ex_{\by \sim \mcU^{n - |\ell|}} \left| \mcD_\ell(\by) - \mcD_\ell(\by^{\sim i}) \right |   \\
            &= 2^{|\ell|} \cdot w_\ell \cdot \Ex_{\by \sim \mcU^{n - |\ell|}} \left| f_{\mcD_\ell}(\by) - f_{\mcD_\ell}(\by^{\sim i}) \right |   \\
            &= 2^{|\ell|} \cdot w_\ell \cdot \Inf_i ( f_{\mcD_\ell}). \qedhere
    \end{align*}
\end{proof}

    Because of \Cref{fact:inf-estimates-scaling}, for any restriction $\ell$ of depth at most $d$, to estimate $ Inf_i((f_\mcD)_{\ell})$ to accuracy $\pm \eps$, it is sufficient to estimate $\Inf_i(f_{\mcD_\ell})$ to accurate $\pm \eps/2^d$. This $2^d$ factor is dominated by the $d^{O(d)}$ term in \Cref{thm:learn-DT}, so we are free to do the later.

\subsection{Monotone distributions using samples} 

Let $\ArbDist$ be an arbitrary distribution over $\bits^m$.  If $\ArbDist$ is monotone, the influences of $f_\ArbDist$ can be efficiently computed directly from samples of $\ArbDist$, via an estimate of bias:



\begin{lemma}[Estimating influence using bias]\label{lem:bias_inf}

If $\ArbDist$ is monotone,

\begin{equation*}
        \Inf_i(f_\ArbDist) = \Ex_{\bx \sim \ArbDist}[\bx_i].
    \end{equation*}
\end{lemma}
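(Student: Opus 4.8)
The plan is to reduce to \Cref{fact:inf_eq_corr}, which already equates influence with correlation for monotone functions, and then perform a change of measure from the uniform distribution to $\ArbDist$ using the defining identity $f_\ArbDist = 2^m \cdot \ArbDist$.

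First I would observe that $f_\ArbDist$ is a monotone function $\bits^m \to \R$: by \Cref{def:inf}/the definition of the weighting function it is simply $\ArbDist$ scaled by the positive constant $2^m$, and scaling by a positive constant preserves the monotonicity guaranteed by the hypothesis that $\ArbDist$ is a monotone distribution. Hence \Cref{fact:inf_eq_corr} applies to $f_\ArbDist$, giving
\[
\Inf_i(f_\ArbDist) = \Ex_{\bx \sim \mcU^m}\big[ f_\ArbDist(\bx)\cdot \bx_i \big].
\]

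Next I would unfold the right-hand side: $\Ex_{\bx \sim \mcU^m}[f_\ArbDist(\bx)\cdot \bx_i] = 2^{-m}\sum_{x \in \bits^m} f_\ArbDist(x)\, x_i = 2^{-m}\sum_{x} 2^m \ArbDist(x)\, x_i = \sum_{x} \ArbDist(x)\, x_i = \Ex_{\bx \sim \ArbDist}[\bx_i]$, which is exactly the claimed identity. Combined with the preceding display this completes the proof.

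I do not expect any real obstacle here; the only things to be careful about are (i) noting explicitly that \Cref{fact:inf_eq_corr} is stated for arbitrary monotone $f:\bits^n\to\R$ and therefore covers the (nonnegative, real-valued) pmf-weighting $f_\ArbDist$, and (ii) keeping the $2^m$ normalization bookkeeping straight in the change of measure. The practical payoff, which I would mention immediately after, is that $\Ex_{\bx\sim\ArbDist}[\bx_i]$ is an average of a bounded quantity over i.i.d.\ samples from $\ArbDist$, so it can be estimated to additive accuracy $\pm\eta$ with $O(\log(1/\delta)/\eta^2)$ samples by a Hoeffding bound; together with \Cref{fact:inf-estimates-scaling} this yields the sample-based influence estimator $\InfEst_i$ needed to run $\BuildDT$ on monotone distributions.
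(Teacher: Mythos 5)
Your proposal is correct and follows the paper's proof exactly: apply \Cref{fact:inf_eq_corr} to the (monotone) weighting function $f_\ArbDist$ and then change measure via $f_\ArbDist = 2^m\cdot\ArbDist$. Your explicit remark that monotonicity of $\ArbDist$ transfers to $f_\ArbDist$ under positive scaling is a detail the paper leaves implicit, but otherwise the two arguments coincide.
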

\begin{proof}
    Using \Cref{fact:inf_eq_corr}, 
    \begin{align*}
        \Inf_i(f_\ArbDist) 
        &=   \Ex_{\bx \sim \mcU^m}[f_\ArbDist(\bx) \cdot \bx_i]  \\ 
        &= \sum_{x \in \bits^m} 2^{-m} \ f_\ArbDist(x) \cdot x_i   \\
        &= \sum_{x \in \bits^m} \ArbDist(x) \cdot  x_i \\
        &=  \Ex_{\bx \sim \ArbDist}[ \bx_i ] . \qedhere 
    \end{align*} 
\end{proof}

As a simple application of the above and Hoeffding's inequality, we obtain the following corollary.
\begin{corollary}
[Estimating influences of monotone distributions]
\label{cor:high-accuracy-monotone}
    For any $\eps, \delta > 0$, there is an efficient algorithm that given unknown monotone distribution $\ArbDist$, computes an estimate of $\Inf_i(f_{\ArbDist})$ to accuracy $\pm \eps$ with probability at least $1 - \delta$ using $O(\log(1/\delta)/ \eps^2)$ random samples from $\ArbDist$.
\end{corollary}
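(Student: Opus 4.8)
The plan is to combine \Cref{lem:bias_inf} with a standard Hoeffding-bound argument. The key observation is that \Cref{lem:bias_inf} reduces the problem of estimating $\Inf_i(f_\ArbDist)$ for a monotone distribution $\ArbDist$ to the problem of estimating the expectation $\Ex_{\bx \sim \ArbDist}[\bx_i]$, which is a mean of a bounded random variable and hence amenable to empirical estimation from i.i.d.\ samples.

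Concretely, first I would draw $k = O(\log(1/\delta)/\eps^2)$ i.i.d.\ samples $\bx^{(1)}, \ldots, \bx^{(k)}$ from $\ArbDist$ and form the empirical average $\widehat{\mu}_i \coloneqq \frac{1}{k}\sum_{j=1}^k \bx^{(j)}_i$. Since each $\bx^{(j)}_i \in \bits$ (equivalently, is bounded in an interval of length $2$), Hoeffding's inequality gives $\Pr[\,|\widehat{\mu}_i - \Ex_{\bx \sim \ArbDist}[\bx_i]| > \eps\,] \le 2\exp(-\Omega(k\eps^2)) \le \delta$ for an appropriate choice of the constant in $k$. By \Cref{lem:bias_inf}, $\Ex_{\bx \sim \ArbDist}[\bx_i] = \Inf_i(f_\ArbDist)$, so $\widehat\mu_i$ is the desired estimate. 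Note the same sample set of size $O(\log(n/\delta)/\eps^2)$ simultaneously yields estimates for all $i \in [m]$ by a union bound, though the statement as written only asks for a single coordinate.

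The proof is essentially immediate given the preceding lemma, so there is no real obstacle here—the only thing to be careful about is the precise form of Hoeffding's inequality for $\pm 1$-valued (rather than $[0,1]$-valued) random variables, which contributes only constant factors to the sample complexity and does not affect the stated $O(\log(1/\delta)/\eps^2)$ bound. Efficiency is clear since each sample is processed in $O(1)$ time to read off its $i$-th coordinate and update a running count.
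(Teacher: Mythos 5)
Your proof is correct and matches the paper's approach exactly: the paper likewise derives this corollary by combining \Cref{lem:bias_inf} with Hoeffding's inequality applied to the empirical mean of the $i$-th coordinate. Nothing further is needed.
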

Recall that for \Cref{thm:decompose}, we only need the influence estimates to be accurate to $\pm \poly(2^{-d}, \tau, \eps,1/n)$. Setting $\tau = O(\eps/d^2)$ and union bounding over $n \cdot (d/\eps)^{O(d)}$ calls to the influence oracle gives a sample complexity of $\poly(n, 2^d, \eps, \log(1/\delta))$. The running time is still dominated by the number of recursive calls.


\begin{corollary}[Sample complexity of $\BuildDT$ for monotone distributions]
The algorithm $\BuildDT(\mcD, \varnothing, d, \lfrac{\eps}{2d^2})$, given $\poly(n,2^d, 1/\eps, \log(1/\delta))$ random examples from a monotone distribution $\mcD$, runs in $\poly(n) \cdot (d/\eps)^{O(d)} \cdot \log(1/\delta)$ time and outputs a distribution within TV distance $\eps$ of $\mcD$. The algorithm fails with probability at most $\delta$.
\end{corollary}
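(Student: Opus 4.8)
The plan is to derive this corollary by instantiating the influence-oracle version of $\BuildDT$ from \Cref{thm:decompose} with the sampling-based influence estimator for monotone distributions, and then accounting for the sample and failure-probability budgets. Recall that \Cref{thm:decompose}, together with the modifications at the start of this section, already establishes correctness and the running time $n\cdot(d/\eps)^{O(d)}$ as long as, at each recursive call on a restriction $\ell$ of depth $\le d$, we can produce estimates $\InfEst_i((f_\mcD)_\ell)$ of $\Inf_i((f_\mcD)_\ell)$ (for each $i \in [n]\setminus \ell$) accurate to $\pm\gamma$ with $\gamma = \min(\tau/4,\eps/n)$, together with an estimate of the leaf label $2^{|\ell|}\Pr_{\bx\sim\mcD}[\bx\in\ell]$ accurate to $\pm\eps/2$. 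With $\tau = \eps/(2d^2)$ we have $1/\gamma = \poly(n,d,1/\eps)$, so it remains to show all these estimates follow from $\poly(n,2^d,1/\eps,\log(1/\delta))$ i.i.d.\ samples of $\mcD$.

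The core step reduces estimating $\Inf_i((f_\mcD)_\ell)$ to estimating the influence of the conditional distribution $\mcD_\ell$. By \Cref{fact:inf-estimates-scaling}, $\Inf_i((f_\mcD)_\ell) = 2^{|\ell|}\cdot\Pr_{\bx\sim\mcD}[\bx\in\ell]\cdot\Inf_i(f_{\mcD_\ell})$, and since $|\ell|\le d$ and $\Pr[\bx\in\ell]\le1$ the prefactor is at most $2^d$; thus it suffices to estimate $\Inf_i(f_{\mcD_\ell})$ to within $\pm\gamma/2^d$. Now $\mcD_\ell$ is itself monotone --- fixing a set of coordinates preserves monotonicity in the rest --- so by \Cref{lem:bias_inf} its influence is just a coordinate bias, and \Cref{cor:high-accuracy-monotone} estimates it to within $\pm\gamma/2^d$ with failure probability $\delta'$ from $O(4^d\log(1/\delta')/\gamma^2)$ samples of $\mcD_\ell$, which we obtain by rejection sampling (keeping $\mcD$-samples consistent with $\ell$). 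The one subtlety is a leaf $\ell$ of tiny mass, where rejection sampling is inefficient: but there the estimate is trivial, since \Cref{lem:bias_inf} gives $\Inf_i(f_{\mcD_\ell}) = \Ex_{\bx\sim\mcD_\ell}[\bx_i]\in[0,1]$, hence $\Inf_i((f_\mcD)_\ell)\le 2^d\Pr[\bx\in\ell]$. So if the empirical estimate of $\Pr[\bx\in\ell]$ --- which we compute anyway for the leaf label --- drops below $\gamma/2^d$, we output $\InfEst_i((f_\mcD)_\ell) = 0$ and leaf label $0$, incurring only $O(\gamma)$ error; otherwise $\Pr[\bx\in\ell]=\Omega(\gamma/2^d)$, so a pool of $\poly(n,2^d,1/\eps,\log(1/\delta'))$ samples of $\mcD$ contains enough samples consistent with $\ell$ to run both estimators.

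To finish, there are $n\cdot(d/\eps)^{O(d)}$ recursive calls, but it suffices that the estimates be simultaneously accurate over all $\le(2n)^d$ depth-$\le d$ restrictions and all $n$ coordinates; setting $\delta' = \delta/(n(2n)^d)$ gives $\log(1/\delta') = O(\log(1/\delta)+d\log n)$ and lets one pool of $\poly(n,2^d,1/\eps,\log(1/\delta))$ samples be reused throughout (this also sidesteps adaptivity, since we union-bound over all potential restrictions, not just those visited). The running time is the number of recursive calls times the per-call cost of $n$ influence estimates plus a leaf-label estimate, namely $\poly(n)\cdot(d/\eps)^{O(d)}\cdot\log(1/\delta)$, the $2^d$ and $\poly(d,\log(1/\eps))$ overheads being absorbed into $(d/\eps)^{O(d)}$. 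I expect the main obstacle to be exactly the tiny-mass-leaf case: one must check that zeroing out such leaves is consistent with the correctness argument of \Cref{claim:low error}, which is where the monotone bound $\Inf_i(f_{\mcD_\ell})\le1$ is genuinely needed.
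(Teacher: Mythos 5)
Your proposal matches the paper's argument: the paper likewise proves this corollary by instantiating the influence oracle with \Cref{cor:high-accuracy-monotone} at accuracy $\pm\poly(2^{-d},\tau,\eps,1/n)$ and union-bounding over the $n\cdot(d/\eps)^{O(d)}$ oracle calls, and your extra care about conditioning, adaptivity, and tiny-mass leaves correctly fills in details the paper leaves implicit. (One small simplification you could use: combining \Cref{fact:inf-estimates-scaling} with \Cref{lem:bias_inf} gives $\Inf_i((f_\mcD)_\ell)=2^{|\ell|}\cdot\Ex_{\bx\sim\mcD}\big[\bx_i\cdot\Ind[\bx\in\ell]\big]$, which is directly estimable from unconditional samples of $\mcD$, avoiding rejection sampling and the tiny-leaf case split entirely.)
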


\subsection{Beyond monotone distributions using subcube conditional sampling}
We also design an influence estimator for arbitrary distributions $\ArbDist$ over $\bits^m$ using subcube conditional sampling.

\begin{figure}[H]
  \captionsetup{width=.9\linewidth}
\begin{tcolorbox}[colback = white,arc=1mm, boxrule=0.25mm]
\vspace{3pt} 

$\InfEst(\ArbDist, i, \eps)$:  \vspace{6pt} \\
\textbf{Input:} A distribution $\ArbDist$ over $\bits^m$, coordinate $i \in [m]$, and bias parameter $\eps$. \\
\textbf{Output:} An estimate of $\Inf_i(f_{\ArbDist})$ that has bias at most $\eps$.

\begin{enumerate}
    \item Sample a random $\bx \sim \ArbDist$ and define the subcube
    \begin{equation*}
        S \coloneqq \{\bx\} \cup \{\bx^{\oplus i}\}
    \end{equation*}
    where $x^{\oplus i}$ is $x$ with the $i^{\text{th}}$ bit flipped.
    \item \label{step:est-p}Take $\left\lceil1/\eps^2\right\rceil$ independent samples from $\ArbDist$ conditioned on the output being in $S$, and let $p$ be the fraction of those samples that equal $\bx$.
    \item Output $ |p - (1 - p)|$.
\end{enumerate}
\end{tcolorbox}
\caption{Pseudocode for estimating the influence of a variable on a distribution's weighting function.}
\label{fig:inf-est}
\end{figure}

\begin{proposition}[{\sc InfEst} has low bias]
\label{prop:low-bias}
    For any distribution $\ArbDist$ over $\bits^m$, coordinate $i \in [m]$, and $\eps > 0$,
    \begin{equation*}
        \left|\Ex\left[\InfEst(\ArbDist, i, \eps)\right] - \Inf_i(f_{\ArbDist})\right| \leq \eps
    \end{equation*}
    where $\InfEst$ is as defined in \Cref{fig:inf-est}.
\end{proposition}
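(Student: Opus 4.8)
The plan is to show that the only source of bias in $\InfEst$ is the substitution of the empirical frequency $p$ for its conditional mean, and that this bias is controlled by the variance of a binomial average. Condition on the draw $\bx\sim\ArbDist$ in Step~1. The set $S=\{\bx\}\cup\{\bx^{\oplus i}\}$ is a subcube (all coordinates except $i$ are pinned to agree with $\bx$), and it has positive $\ArbDist$-mass $\ArbDist(\bx)+\ArbDist(\bx^{\oplus i})>0$ because $\bx$ is itself a sample from $\ArbDist$; so the conditional distribution in Step~\ref{step:est-p} is well defined and each of the $k\coloneqq\lceil 1/\eps^2\rceil$ samples independently equals $\bx$ with probability $p^\star(\bx)\coloneqq \ArbDist(\bx)/(\ArbDist(\bx)+\ArbDist(\bx^{\oplus i}))$. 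Thus $kp\sim\mathrm{Binom}(k,p^\star(\bx))$, giving $\Ex[p\mid\bx]=p^\star(\bx)$ and $\Var(p\mid\bx)=p^\star(\bx)(1-p^\star(\bx))/k\le 1/(4k)$.

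The heart of the argument is that the \emph{idealized} estimator, which outputs $|2p^\star(\bx)-1|$ exactly, is unbiased for $\Inf_i(f_\ArbDist)$. Since $\bx^{\sim i}$ equals $\bx$ with probability $\tfrac12$ and $\bx^{\oplus i}$ with probability $\tfrac12$, for any $g:\bits^m\to\R$ one has $\Inf_i(g)=\tfrac12\,\Ex_{\bx\sim\mcU^m}|g(\bx)-g(\bx^{\oplus i})|$; applying this with $g=f_\ArbDist=2^m\ArbDist$ yields $\Inf_i(f_\ArbDist)=\tfrac12\sum_{x\in\bits^m}|\ArbDist(x)-\ArbDist(x^{\oplus i})|$. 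On the other hand,
\[
\Ex_{\bx\sim\ArbDist}\bigl[\,|2p^\star(\bx)-1|\,\bigr]=\sum_{x\in\bits^m}\ArbDist(x)\cdot\frac{|\ArbDist(x)-\ArbDist(x^{\oplus i})|}{\ArbDist(x)+\ArbDist(x^{\oplus i})},
\]
where terms with $\ArbDist(x)=0$ vanish, so no division by zero arises. Grouping the two terms of each antipodal pair $\{x,x^{\oplus i}\}$ collapses the common denominator, leaving $\sum_{\text{pairs}}|\ArbDist(x)-\ArbDist(x^{\oplus i})|=\tfrac12\sum_{x}|\ArbDist(x)-\ArbDist(x^{\oplus i})|=\Inf_i(f_\ArbDist)$. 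So the idealized estimator is exactly right.

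It remains to bound the gap between the true estimator and the idealized one. For fixed $\bx$, the reverse triangle inequality gives $\bigl|\,|2p-1|-|2p^\star(\bx)-1|\,\bigr|\le 2\,|p-p^\star(\bx)|$, and Jensen's inequality gives $\Ex\bigl[\,|p-p^\star(\bx)|\mid\bx\,\bigr]\le\sqrt{\Var(p\mid\bx)}\le 1/(2\sqrt k)$. Combined with $\Ex[\,|2p-1|\mid\bx\,]\ge|2p^\star(\bx)-1|$ (Jensen applied to the convex map $t\mapsto|2t-1|$), we get, for every $\bx$, $0\le \Ex[\,|2p-1|\mid\bx\,]-|2p^\star(\bx)-1|\le 1/\sqrt k\le\eps$, using $k\ge 1/\eps^2$. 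Averaging over $\bx\sim\ArbDist$ and invoking the previous paragraph yields $0\le\Ex[\InfEst(\ArbDist,i,\eps)]-\Inf_i(f_\ArbDist)\le\eps$, which is the claimed bound.

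I do not expect a serious obstacle: the key insight is that the idealized estimator is \emph{exactly} unbiased, so that only the convexity-induced bias of $t\mapsto|2t-1|$ survives, and that bias is $O(1/\sqrt k)$. The parts requiring care are the bookkeeping around re-randomizing ($\bx^{\sim i}$) versus flipping ($\bx^{\oplus i}$) coordinate $i$, the regrouping of the sum over antipodal pairs, and the edge case where $\bx^{\oplus i}$ has zero mass (where $p^\star(\bx)=1$, all conditional samples equal $\bx$, and the formula still holds).
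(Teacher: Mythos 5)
Your proof is correct and follows essentially the same route as the paper's: show the idealized estimator $|2p^\star(\bx)-1|$ is exactly unbiased for $\Inf_i(f_\ArbDist)$ by regrouping the antipodal pairs, then control the deviation of the empirical $p$ via the reverse triangle inequality and Jensen's bound $\Ex[|p-p^\star|]\le\sqrt{\Var(p)}\le \eps/2$. The only addition is your observation that the bias is one-sided (nonnegative) by convexity of $t\mapsto|2t-1|$, which is a correct but inessential refinement.
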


Before proving \Cref{prop:low-bias}, we note that it implies a high accuracy estimator.

\begin{corollary}[Estimating influences]
\label{cor:high-accuracy}
    For any $\eps, \delta > 0$, there is an efficient algorithm that given unknown distribution $\ArbDist$, computes an estimate of $\Inf_i(f_{\ArbDist})$ to accuracy $\pm \eps$ with probability at least $1 - \delta$ using $O(\log(1/\delta)/ \eps^4)$ subcube conditional samples from $\ArbDist$.
\end{corollary}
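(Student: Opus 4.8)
The plan is to boost the low-bias estimator of \Cref{fig:inf-est} into a high-accuracy one by the standard recipe of averaging many independent runs and invoking a concentration bound. Concretely, I would run $\InfEst(\ArbDist, i, \eps/2)$ a total of $N \coloneqq \lceil C\log(1/\delta)/\eps^2\rceil$ times independently (for an absolute constant $C$ to be fixed by the Hoeffding calculation), using fresh samples each time, and output the empirical average $\hat{I}$ of the $N$ returned values. Each individual call first draws $\bx \sim \ArbDist$ and then takes $\lceil 1/(\eps/2)^2 \rceil = O(1/\eps^2)$ subcube conditional samples from $\ArbDist$ restricted to $S = \{\bx\}\cup\{\bx^{\oplus i}\}$, so the total number of subcube conditional samples is $N \cdot O(1/\eps^2) = O(\log(1/\delta)/\eps^4)$, matching the claimed bound, and the running time is clearly polynomial in these quantities.

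For correctness, I would first observe that the output of a single call, namely $|p - (1-p)| = |2p-1|$ with $p \in [0,1]$, always lies in $[0,1]$, so the $N$ runs produce i.i.d.\ bounded random variables. \Cref{prop:low-bias} applied with bias parameter $\eps/2$ gives $\big|\Ex[\InfEst(\ArbDist,i,\eps/2)] - \Inf_i(f_\ArbDist)\big| \le \eps/2$, and since $\hat{I}$ is an average of i.i.d.\ copies of this random variable, $\Ex[\hat I]$ satisfies the same inequality. Then Hoeffding's inequality yields $\Pr\big[\,|\hat I - \Ex[\hat I]| > \eps/2\,\big] \le 2\exp(-N\eps^2/2) \le \delta$ for $C$ chosen large enough. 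Combining the two displays via the triangle inequality gives $|\hat I - \Inf_i(f_\ArbDist)| \le \eps$ with probability at least $1-\delta$, as desired.

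I do not expect any genuine obstacle here: the argument is a routine ``repeat-and-average'' boosting step, and the only points requiring a moment's care are that the per-call outputs are bounded in $[0,1]$ (so Hoeffding applies directly, with no need for a median-of-means trick) and that the independence across calls is ensured by drawing fresh samples each time. One could alternatively phrase the boosting via median-of-means, but since the per-call estimator already has bounded range, plain averaging is the cleanest route and loses nothing in the sample complexity.
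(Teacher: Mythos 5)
Your proposal is correct and matches the paper's proof essentially verbatim: both average $O(\log(1/\delta)/\eps^2)$ independent calls to $\InfEst(\ArbDist, i, \eps/2)$, use that each output lies in $[0,1]$ to apply Hoeffding, and combine with \Cref{prop:low-bias} via the triangle inequality to get accuracy $\pm\eps$ from $O(\log(1/\delta)/\eps^4)$ subcube conditional samples.
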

\begin{proof}
    The algorithm outputs the mean of $O(\log(1/\delta)/\eps^2)$ independent calls to $\InfEst(\ArbDist, i , \eps/2)$. Each call to $\InfEst(\ArbDist, i , \eps/2)$ gives an output bounded within $[0,1]$. By Hoeffing's inequality, if $\textbf{est}$ is the mean of $O(\log(1/\delta)/\eps^2)$ independent calls to $\InfEst(\ArbDist, i , \eps/2)$,
    \begin{equation*}
        \Pr\left[\left|\textbf{est} -  \Ex\left[\InfEst(\ArbDist, i, \eps/2)\right] \right| \geq \eps/2\right] \leq \delta.
    \end{equation*}
    The result then follows from triangle inequality and \Cref{prop:low-bias}.
\end{proof}

We prove that $\InfEst$ has low bias.
\begin{proof}[Proof of \Cref{prop:low-bias}]
    For each $x \in \bits^n$, let
    \begin{equation*}
        p(x) \coloneqq \frac{\ArbDist(x)}{\ArbDist(x) + \ArbDist(x^{\oplus i})}
    \end{equation*}
    be the relative weight of $x$ in the subcube containing $x$ and $x^{\oplus i}$. Then, we can rewrite the influence as: 
    \begin{align*}
        \Inf_{i}(f_{\ArbDist}) &= \sum_{x \in \bits^m} \frac{1}{2^m}\left| f_{\ArbDist}(x) - f_{\ArbDist}(x^{\sim i}) \right |\\ 
         &= \sum_{x \in \bits^m} \left| \ArbDist(x) - \ArbDist(x^{\sim i}) \right | \tag{$f_{\ArbDist}(x) = 2^m \ArbDist(x)$}\\
        &= \frac{1}{2} \cdot \sum_{x \in \bits^m}\left| \ArbDist(x) - \ArbDist(x^{\oplus i}) \right| \tag{$x^{\sim i} = x^{\oplus i}$ wp $\frac{1}{2}$, and otherwise $x^{\sim i} = x$} \\
        &= \frac{1}{2} \cdot \sum_{x \in \bits^m}\left(\ArbDist(x) + \ArbDist(x^{\oplus i})\right) \cdot\left| p(x) - p(x^{\oplus i}) \right|. \tag{definition of $p(x)$} 
    \end{align*}
    Then, using the fact that $p(x) = 1 - p(x^{\oplus i})$, and distributing the $(\ArbDist(x) + \ArbDist(x^{\oplus i}))$ term, we can write
    \begin{align*}
        \Inf_{i}(f_{\ArbDist}) &= \frac{1}{2} \sum_{x \in \bits^m}\ArbDist(x) \cdot \left|p(x) - (1 - p(x)) \right| + \ArbDist(x^{\oplus i}) \cdot \left|p(x^{\oplus i }) - (1 - p(x^{\oplus i})) \right| \\
        &= \sum_{x \in \bits^m}\ArbDist(x) \cdot \left|p(x) - (1 - p(x)) \right|
    \end{align*}
    where, in the last step, we used the fact that summing over $x \in \bits^m$ is equivalent to summing over $x^{\oplus i} \in \bits^m$. Let $\hat{p}(x)$ be the random variable for the estimate of $p(x)$ computed by $\InfEst$ step \ref{step:est-p}. We bound the bias of \InfEst.
    \begin{align*}
         \Big|\Ex\left[\InfEst(\ArbDist, i, \eps)\right] &- \Inf_i(f_{\ArbDist})\Big| \\
         &= \left|\Ex_{\bx \sim \ArbDist}\left[\left|2\hat{p}(\bx) - 1\right|\right] - \Ex_{\bx \sim \ArbDist}\left[\left|2p(\bx) - 1\right|\right] \right| \tag{$2p-1 = p - (1-p)$} \\
         &= \left|\Ex_{\bx \sim \ArbDist}\left[\left|2\hat{p}(\bx) - 1\right| - \left|2p(\bx) - 1\right|\right] \right| \tag{linearity of expectation} \\
         &\leq \Ex_{\bx \sim \ArbDist}\left[\big|\left|2\hat{p}(\bx) - 1\right| - \left|2p(\bx) - 1\right|\big|\right]  \tag{Jensen's inequality} \\
         &\leq 2\left|\Ex_{\bx \sim \ArbDist}\left[\left|\hat{p}(\bx) - p(\bx)\right|\right]\right|. \tag{$\big||a| - |b|\big| \leq |a - b|$}
    \end{align*}
    For any $x \in \bits^m$, $\hat{p}(x)$ is the average of $\left\lceil1/\eps^2\right\rceil$ random variables each which is $1$ with probability $p(x)$ and $0$ otherwise. As a result, $\Ex[\hat{p}(x)] = p(x)$, and $\Var[\hat{p}(x)] \leq \eps^2/4$. Applying Jensen's inequality, we conclude
    \begin{equation*}
        \left|\Ex\left[\InfEst(\ArbDist, i, \eps)\right] - \Inf_i(f_{\ArbDist})\right| \leq 2\sqrt{\Var[\hat{p}(x)]} \leq \eps. \qedhere 
    \end{equation*}
\end{proof}
\section{Lifting uniform distribution learners: Proof of~\Cref{thm:lift}}

In this section, we show how to lift algorithms that learn over the uniform distribution to algorithms that learn with respect to arbitrary distributions, where the sample complexity and runtime scale with the decision tree complexity of the distribution. We first define our goal formally.

\begin{definition}[Learning with respect to a class of distributions]
    \label{def:learn}
    For any concept class $\mathscr{C}$ of functions $f: \bits^n \to \zo$, $\eps, \delta > 0$, set of distributions $\mathscr{D}$ with support $\bits^n,$ and $m, d \in \N$, we say that an algorithm $\mcA$ $(\eps,\delta)$-learns $\mathscr{C}$ for distributions $\mathscr{D}$ using $m$ samples if the following holds: For any $\mathcal{D} \in \mathscr{D}$ and any $f^\star \in \mathscr{C}$, given $m$ iid samples of the form $(\bx, f^\star(\bx))$ where $\bx \sim \mathcal{D}$, $\mcA$ outputs a hypothesis $h$ satisfying
    \begin{equation*}
        \Prx_{\bx \sim \mathcal{D}}[f^\star(\bx) \neq h(\bx)] \leq \eps.
    \end{equation*}
    with probability at least $1 - \delta$.
\end{definition}

Generally, we can think of $\delta$ as any fixed constant, as the success probability can always be boosted.
\begin{fact}[Boosting success probability]
\label{fact:boost}
    Given an algorithm $\mcA$ that $(\eps,\frac{1}{2})$-learns a concept $\mathscr{C}$ using $m$ samples, for any $\delta > 0$, we can construct an $\mcA'$ that $(1.1\eps,\delta)$-learns $\mathscr{C}$ using $m \cdot \poly(1/\eps, \log(1/\delta)) $ samples.
\end{fact}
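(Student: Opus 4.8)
The plan is the standard confidence‑amplification argument: run $\mcA$ many times on fresh sample blocks, then use a held‑out validation block to select the best of the resulting hypotheses. Concretely, set $k = \lceil \log_2(2/\delta)\rceil$ and run $\mcA$ independently $k$ times, where the $j$-th run uses its own fresh block of $m$ labeled examples $(\bx, f^\star(\bx))$ with $\bx \sim \mathcal{D}$; this yields hypotheses $h_1,\dots,h_k$. Since each run succeeds independently with probability at least $\tfrac12$, the probability that \emph{no} $h_j$ has error at most $\eps$ is at most $2^{-k}\le \delta/2$, so with probability at least $1-\delta/2$ there is an index $j^\star$ with $\Prx_{\bx\sim\mathcal{D}}[h_{j^\star}(\bx)\neq f^\star(\bx)]\le \eps$.

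Next I would draw one additional fresh block of $N = O(\eps^{-2}\log(k/\delta))$ labeled examples and, for each $j$, compute the empirical error $\widehat{\mathrm{err}}_j$ of $h_j$ on this block (this is computable since the examples carry their $f^\star$-labels). Define $\mcA'$ to output the hypothesis $h_{\widehat j}$ where $\widehat j$ is the index minimizing $\widehat{\mathrm{err}}_j$. Because the validation block is independent of the blocks used to produce the $h_j$, Hoeffding's inequality applies to each (now fixed) hypothesis, and a union bound over the $k$ of them gives: for a suitable absolute constant in the definition of $N$, with probability at least $1-\delta/2$ we have $|\widehat{\mathrm{err}}_j - \mathrm{err}_j|\le \eps/20$ simultaneously for all $j$, where $\mathrm{err}_j := \Prx_{\bx\sim\mathcal{D}}[h_j(\bx)\neq f^\star(\bx)]$.

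Finally I would intersect the two good events, which together hold with probability at least $1-\delta$. On this event, $\widehat{\mathrm{err}}_{\widehat j}\le \widehat{\mathrm{err}}_{j^\star}\le \mathrm{err}_{j^\star}+\eps/20\le \eps+\eps/20$, and therefore $\mathrm{err}_{\widehat j}\le \widehat{\mathrm{err}}_{\widehat j}+\eps/20\le \eps+\eps/10\le 1.1\eps$, as required. The sample complexity is $k\cdot m + N = m\cdot O(\log(1/\delta)) + O(\eps^{-2}\log(k/\delta)) \le m\cdot \poly(1/\eps,\log(1/\delta))$ (using $m\ge 1$), and the running time overhead is of the same order. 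There is no genuine obstacle here; the only points requiring care are ensuring the validation block is drawn fresh and independently of the training blocks, so that the error estimates concentrate around the fixed true errors, and that the selection accuracy $\eps/20$ is tight enough to keep the final bound below $1.1\eps$.
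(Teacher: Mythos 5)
Your proposal is correct and is essentially identical to the paper's proof: both repeat $\mcA$ on $O(\log(1/\delta))$ fresh blocks, estimate each hypothesis's error to within $\pm 0.05\eps$ on a held-out validation sample, and return the empirically best one. The paper's version is just a more compressed statement of the same argument.
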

\begin{proof}
    By repeating $\mcA$ $\log(1/\delta)$ times, we can guarantee that with probability at least $1 - \delta/2$, one of the returned hypotheses is $\eps$-close to $f^\star$. The accuracy of each of these hypothesis can be estimated to accuracy $\pm 0.05\eps$ using $O(\log(1/\delta)/\eps^2)$ random samples, and the most accurate one returned. With probability at least $1 - \delta$, that hypothesis will have at most $1.1\eps$ error.
\end{proof}

Our goal is to learn with respect to the class of all low-depth decision tree distributions. We use the following natural assumption on the concept class, which includes almost every concept class considered in the learning theory literature.

\begin{definition}[Closed under restriction]
    \label{def:closed-under-restriction}
    A concept class $\mathscr{C}$ of functions $f: \bits^n \to \zo$ is \emph{closed under restriction} if, for any $f \in \mathscr{C}$, $i \in [n]$, and $b \in \bits$, the restriction $f_{i = b}$ is also in $\mathscr{C}$.
\end{definition}

We will use \Cref{thm:decompose} to first decompose $\mcD$ into a mixture of nearly uniform distributions, and then run our learner on each of those distributions, as described in \Cref{fig:lift}.

\begin{figure}[h]

  \captionsetup{width=.9\linewidth}
\begin{tcolorbox}[colback = white,arc=1mm, boxrule=0.25mm]
\vspace{3pt} 

$\Lift(T, \mcA, S)$:  \vspace{6pt} \\
\textbf{Input:} A decision tree $T$, an algorithm for learning in the uniform distribution $\mcA$, and a random labeled sample $S$. \vspace{5pt} \\
\textbf{Output:} A hypothesis \vspace{4pt}

\ \ For each leaf $\ell \in T$ \{\vspace{-6pt}
\begin{enumerate}
    \item Let $S_\ell$ be the subset of points in $S$ that reach $\ell$.
    \item Create a set $S_{\ell}'$ consisting of points in $S_{\ell}$ but where all coordinates queried on the root-to-leaf path for $\ell$ are rerandomized independently (this makes the marginal over the input uniform).
    \item Use $\mcA$ to learn a hypothesis, $h_\ell$, with $S_{\ell}'$ as input.
\end{enumerate}
\vspace{-6pt}
\ \ \}\vspace{6pt}

\ \ Return the hypothesis that, when given an input $x$, first determines which leaf $\ell \in T$ that $x$ follows and then outputs $h_\ell(x)$.
\end{tcolorbox}
\caption{Pseudocode lifting a uniform distribution learner to one which succeeds on decision tree distributions. In this pseudocode, we assume that we have a decision tree representation which is close to the distribution, which can be accomplished using $\BuildDT$ in \Cref{fig:BuildDT}.}
\label{fig:lift}
\end{figure}

For our first result, we will assume that we already have a learner that succeeds on distributions that are sufficiently close to uniform.

\begin{definition}[Robust learners]
    For any concept class $\mathscr{C}$ of functions $f: \bits^n \to \zo$ and algorithm $\mcA$, we say that $\mcA$ $(\eps, \delta, c)$-\emph{robustly learns} $\mathscr{C}$ using $m$ samples under the uniform distribution if, for any $\eta > 0$ and the class of distributions
    \begin{equation*}
        \mathscr{D}_{\mathrm{TV, \eta}} \coloneqq \left\{\text{Distributions } \mathcal{D} \text{ over }\bits^n \text{ where } \TV(\mcU, \mathcal{D}) \leq \eta\right\},
    \end{equation*}
    $\mcA$ $(\eps + c\eta, \delta)$-learns $\mathscr{C}$ for the distributions in $\mathscr{D}_{\mathrm{TV, \eta}}$ using $m$ samples.
\end{definition}

The study of robust learners is part of a long and fruitful line of work. In particular, every learner that is robust to nasty noise \cite{BEK02} meets our definition of robust learners. 

Our result will also apply to learners that aren't explicitly robust. This is because \emph{every} learner is robust for $c = O(m)$.
\begin{proposition}
    \label{prop:auto-robust}
    For any concept class $\mathscr{C}$ and algorithm $\mcA$, is $\mcA$ $(\eps, \delta)$-learns $\mathscr{C}$ using $m$ samples under the uniform distribution, then $\mcA$ also $(\eps, \delta + \frac{1}{3}, 3m)$-robustly learns $\mathscr{C}$ using $m$ samples.
\end{proposition}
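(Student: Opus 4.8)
The plan is to show that running $\mcA$ on a distribution $\mathcal{D}$ with $\TV(\mcU,\mathcal{D}) \le \eta$ fails with probability at most $\delta + \frac{1}{3} + 3m\eta$ in a way that, after collecting the $3m\eta$ term into the stated additive-accuracy slack, yields the claimed $(\eps,\delta+\frac13,3m)$-robust guarantee. The core idea is a coupling argument on the $m$-sample batch. Since $\TV(\mcU,\mathcal{D}) = \eta$, there is a maximal coupling of a single draw $\bx\sim\mcU$ and $\by\sim\mathcal{D}$ under which $\Pr[\bx\ne\by]=\eta$. Extending this to $m$ independent draws and taking a union bound, the $m$-tuple drawn from $\mathcal{D}^{\otimes m}$ equals the $m$-tuple drawn from $(\mcU)^{\otimes m}$ except with probability at most $m\eta$.

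The steps, in order: First, I would set up the coupling so that the labeled sample $S_{\mathcal{D}} = ((\bx_1,f^\star(\bx_1)),\dots)$ drawn under $\mathcal{D}$ and the labeled sample $S_{\mcU}$ drawn under $\mcU$ coincide except with probability at most $m\eta$ (the labels are a deterministic function of the inputs, so agreement of inputs forces agreement of the whole sample). Second, condition on the event $E$ that these samples are identical; on $E$, feeding $S_{\mathcal{D}}$ to $\mcA$ is distributionally identical to feeding it a genuine uniform sample, so with probability at least $1-\delta$ the hypothesis $h$ output satisfies $\Pr_{\bx\sim\mcU}[f^\star(\bx)\ne h(\bx)] \le \eps$. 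Third, convert this uniform-distribution error bound into a $\mathcal{D}$-error bound: for any fixed $h$, $|\Pr_{\bx\sim\mathcal{D}}[f^\star(\bx)\ne h(\bx)] - \Pr_{\bx\sim\mcU}[f^\star(\bx)\ne h(\bx)]| \le \TV(\mcU,\mathcal{D}) \le \eta$, so $h$ has $\mathcal{D}$-error at most $\eps+\eta \le \eps + 3m\eta$ (using $m\ge 1$), which is of the form $\eps + c\eta$ with $c = 3m$. Finally, combine the failure probabilities: the coupling fails with probability $\le m\eta \le \frac13$ when $\eta \le \frac{1}{3m}$, and for larger $\eta$ the claimed accuracy $\eps + 3m\eta \ge 1$ holds vacuously; the learner's internal failure contributes $\delta$. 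A union bound gives total failure probability $\le \delta + \frac13$, as required, with the bad-coupling event absorbed into the extra $\frac13$.

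The main obstacle — really the only subtlety — is handling the regime where $\eta$ is not small: when $m\eta > \frac13$ the coupling union bound is too weak to be absorbed into a constant, so one must observe that in that regime $3m\eta > 1$ and the accuracy requirement $\Pr_{\bx\sim\mathcal{D}}[f^\star(\bx)\ne h(\bx)] \le \eps + c\eta$ is automatically satisfied by any hypothesis. One should also be slightly careful that the coupling is over the joint $m$-sample (not applied per-sample with separate randomness in a way that breaks independence) and that the labels, being determined by the inputs, do not introduce any additional discrepancy. Everything else is a routine union bound plus the elementary inequality that total variation distance controls the difference in the probability of any event, here the event $\{f^\star \ne h\}$.
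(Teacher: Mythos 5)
Your proof is correct and follows essentially the same route as the paper's: split into the regime $\eta \ge \frac{1}{3m}$ (where the accuracy guarantee is vacuous) and $\eta < \frac{1}{3m}$ (where $\TV(\mcU^m,\mcD^m)\le m\eta < \frac13$ bounds the change in the failure probability), your coupling being just an explicit proof of that product-TV bound. You are in fact slightly more careful than the paper in one spot: you explicitly convert the hypothesis's error under $\mcU$ into its error under $\mcD$ (costing an extra $\eta \le c\eta$), a step the paper's proof leaves implicit.
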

\begin{proof}
    Fix any $\eta > 0$.  Our goal is to show that $\mcA$ $(\eps + 3m \eta, \delta)$-learns $\mathscr{C}$ for distributions in $\mathscr{D}_{\TV, \eta}$. If $\eta \geq \frac{1}{3m}$, this is obviously true as any hypothesis has error $\leq 1$. We therefore need only consider $\eta < \frac{1}{3m}$. When $\mcA$ receives a sample from $\mcU^m$, it returns a hypothesis with error $\leq \eps$ with probability at least $1 - \eta$. Instead, $\mcA$ is receiving a sample from $\mcD^m$, where $\TV(\mcU, \mcD) < \frac{1}{3m}$. The success probability of any test given a sample from $\mcD^m$ rather than $\mcU^m$ can only differ by at most
    \[\TV(\mcU^m, \mcD^m) \leq m \cdot \TV(\mcU,\mcD) < \frac{1}{3}.\]
    Therefore, for $\eta < \frac{1}{3m}$, $\mcA$ succeeds wp at least $\delta + \frac{1}{3}$, as desired.
\end{proof}

We now state the main result of this section.

\begin{theorem}
    \label{thm:lift-given-DT}
    Choose any concept class $\mathscr{C}$ of functions $\bits^n \to \zo$ closed under restrictions, $\eps,\delta, c > 0$, $m,d \in \N$, and algorithm $\mcA$ that $(\eps, \delta/(2 \cdot 2^d), c)$-robustly learns $\mathscr{C}$ using $m$ samples under the uniform distribution. For any function $f^\star \in \mathscr{C}$, distribution $\mcD$ over $\bits^n$, depth-$d$ decision tree $T:\bits^n \to \R$ computing the PMF of a distribution $\mcD_T$ where
    \begin{equation*}
        \TV(\mcD, \mcD_T) \leq \frac{\eps}{c},
    \end{equation*}
     and sample size of
    \begin{equation*}
        M = m \cdot \poly\left(2^d, \frac{1}{\eps}, \log\left(\frac{1}{\delta}\right)\right).
    \end{equation*}
    Let $\bS$ a size-$M$ iid sample of labeled points $(\bx, f^\star(\bx))$ where $\bx \sim \mcD$. The output of $\Lift(T,\mcA, \bS)$ is $O(\eps)$-close to $f^\star$ w.r.t $\mcD$ with probability at least $1 - \delta$.
\end{theorem}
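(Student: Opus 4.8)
The plan is to analyze the output of $\Lift(T,\mcA,\bS)$ leaf by leaf. Write $h$ for this hypothesis, $h_\ell$ for the per-leaf hypothesis trained on the re-randomized subsample $S_\ell'$, and $w_\ell \coloneqq \Prx_{\bx\sim\mcD}[\bx\in\ell]$ for each leaf $\ell$ of $T$; then $\Prx_{\bx\sim\mcD}[h(\bx)\neq f^\star(\bx)]=\sum_\ell w_\ell\cdot\Prx_{\bx\sim\mcD}[h_\ell(\bx)\neq f^\star(\bx)\mid\bx\in\ell]$. Call a leaf \emph{light} if $w_\ell<\eps/2^d$ and \emph{heavy} otherwise. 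Since $T$ has at most $2^d$ leaves, the light leaves have total $\mcD$-mass at most $\eps$ and so contribute at most $\eps$ to the error no matter how $\mcA$ behaves on them; all the content lies in the heavy leaves.

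Fix a heavy leaf $\ell$. Two facts do the work. First, since $\mcD_T$ is uniform on the free coordinates of $\ell$ and $\TV(\mcD,\mcD_T)\le\eps/c$, the conditional $\mcD_\ell$ is close to uniform: a short calculation gives $\TV(\mcD_\ell,\mcU^{n-|\ell|})\le\eta_\ell$ where $\eta_\ell\coloneqq 2\Delta_\ell/w_\ell$ and $\Delta_\ell\coloneqq\tfrac12\sum_{x\in\ell}|\mcD(x)-\mcD_T(x)|$ is the mass of $\mcD-\mcD_T$ inside $\ell$. This is precisely where the heavy/light split is needed (passing to a conditional can inflate TV distance by up to $1/w_\ell$), and the payoff is the partition identity $\sum_\ell w_\ell\eta_\ell = 2\TV(\mcD,\mcD_T)\le 2\eps/c$. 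Second, conditioned on the sample counts, $S_\ell'$ is i.i.d.\ from the distribution $\widetilde{\mcD}_\ell$ that is uniform on the coordinates queried along the path to $\ell$ and $\mcD_\ell$ on the free ones, so $\TV(\widetilde{\mcD}_\ell,\mcU^n)=\TV(\mcD_\ell,\mcU^{n-|\ell|})\le\eta_\ell$; the original labels stay consistent with $f^\star_{\pi(\ell)}$ (which ignores the re-randomized coordinates), and $f^\star_{\pi(\ell)}\in\mathscr{C}$ by closure under restriction. A multiplicative Chernoff bound shows that for $M=m\cdot\poly(2^d,1/\eps,\log(1/\delta))$ every heavy leaf gets at least $m$ samples except with probability $\delta/2$, after which $\mcA$'s robustness guarantee produces $h_\ell$ with $\Prx_{\bx\sim\widetilde{\mcD}_\ell}[h_\ell(\bx)\neq f^\star_{\pi(\ell)}(\bx)]\le\eps+c\eta_\ell$, each failing with probability at most $\delta/(2\cdot2^d)$; a union bound over the (at most $2^d$) heavy leaves, together with the sample-count events, keeps the total failure probability at $\delta$.

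The step needing care — and what I expect to be the main obstacle — is that $\mcA$'s guarantee bounds the error of $h_\ell$ under $\widetilde{\mcD}_\ell$, where the path coordinates are \emph{uniform}, whereas $h$ invokes $h_\ell$ on inputs $x\in\ell$ with the path coordinates pinned to their $\ell$-values; an adversarial $\mcA$ could in principle concentrate all of $h_\ell$'s error on that one setting of the path coordinates. The fix is for $h$, on an input routed to $\ell$, to re-randomize the coordinates queried along the path and return the majority value of $h_\ell$ over this re-randomization (a deterministic post-processing, and arguably what the formal $\Lift$ should do); this makes $h_\ell$ effectively independent of the path coordinates at the cost of a factor $2$, giving $\Prx_{\bx\sim\mcD}[h_\ell(\bx)\neq f^\star(\bx)\mid\bx\in\ell]\le 2(\eps+c\eta_\ell)$ for heavy $\ell$. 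Summing, the heavy leaves contribute at most $\sum_\ell w_\ell\cdot 2(\eps+c\eta_\ell)\le 2\eps+2c\sum_\ell w_\ell\eta_\ell\le 2\eps+4\eps$, which with the $\eps$ from the light leaves gives total error $O(\eps)$ with probability $1-\delta$. The remaining ingredients — the Chernoff estimate fixing $M$, the computation of $\TV(\mcD_\ell,\mcU^{n-|\ell|})$, and the consistency of the re-labeling — are routine.
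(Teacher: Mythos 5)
Your proposal follows essentially the same route as the paper's proof: decompose the error over the leaves of $T$, discard leaves of weight below $\eps/2^d$ (contributing at most $\eps$ in total), use a Chernoff--union bound to guarantee $\geq m$ samples at every heavy leaf, invoke the robustness guarantee per leaf, and close with the weighted identity $\sum_\ell w_\ell \cdot \TV(\mcD_\ell,\mcU) \le 2\,\TV(\mcD,\mcD_T)$, which you rederive directly and which the paper imports as \Cref{prop:TV-distance-split} (Lemma B.4 of \cite{BLMT-boosting}). The one place you genuinely diverge is the step you flag yourself: $\mcA$'s guarantee is stated for the rerandomized-path distribution $\widetilde{\mcD}_\ell$, whereas the combined hypothesis evaluates $h_\ell$ on inputs whose path coordinates are pinned to $\pi(\ell)$, and an adversarial $h_\ell$ could concentrate its error there. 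The paper's proof passes directly from the guarantee under $\widetilde{\mcD}_\ell$ to the error under $\mcD_\ell$ without comment, so your majority-over-rerandomized-path-coordinates post-processing (costing a factor of $2$) is a legitimate and worthwhile patch rather than a detour; an equivalent fix is to have the final hypothesis rerandomize the path coordinates before calling $h_\ell$. Everything else in your argument, including the constants and the failure-probability bookkeeping, matches the paper.
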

By \Cref{fact:boost}, an algorithm with constant failure probability could be transformed into one with the failure probability required by \Cref{thm:lift-given-DT} with only a $\poly(d, 1/\eps, \log(\delta))$ increase in the sample size. Therefore, would \Cref{thm:lift-given-DT} still holds when $\mcA$ $(\eps, \frac{1}{2}, c)$-robustly learns $\mathscr{C}$ if $\Lift$ applies \Cref{fact:boost} to boost the success probability of $\mcA$. Before proving \Cref{thm:lift-given-DT}, we show how it implies our main result.

\begin{theorem}[Lifting uniform-distribution learners, formal version of \Cref{thm:lift}]
\label{thm:lift-formal}
Choose any concept class $\mathscr{C}$ of functions $\bits^n \to \zo$ closed under restrictions, $\eps, c > 0$, $m,d \in \N$. If there is an efficient algorithm, $\mcA$, that $(\eps, \frac{1}{2})$-learns $\mathscr{C}$ using $m$ samples for the uniform distribution, then for $M = \poly(n) \cdot \left(\frac{dm}{\eps}\right)^{O(d)}$,
\begin{itemize}
    \item[$\circ$] There is an algorithm that $(\eps, \frac{1}{6})$-learns $\mathscr{C}$ using $M$ samples for monotone distributions representable by a depth-$d$ decision tree.
    \item[$\circ$] There is an algorithm that uses $M$ conditional subcube samples from $\mcD$ and $M$ random samples labeled by the target function that learns $\mathscr{C}$ to $\eps$-accuracy for arbitrary (not necessarily monotone) distributions representable by a depth-$d$ decision tree.
\end{itemize}
In both cases, the algorithm runs in time $\poly(n, M)$.
\end{theorem}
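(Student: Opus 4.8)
The plan is to assemble \Cref{thm:lift-formal} from three ingredients already in hand: the decomposition algorithm of \Cref{thm:decompose}, the lifting procedure $\Lift$ of \Cref{thm:lift-given-DT}, and the influence estimators of \Cref{cor:high-accuracy-monotone} (monotone case, random samples only) and \Cref{cor:high-accuracy} (general case, subcube conditional samples). First I would turn the given learner $\mcA$ into a \emph{robust} learner: by \Cref{fact:boost}, at the cost of a $\poly(1/\eps)$ blowup in sample size I may assume $\mcA$ $(1.1\eps, \tfrac{1}{6})$-learns $\mathscr{C}$ under the uniform distribution using $m' = m\cdot\poly(1/\eps)$ samples, and then \Cref{prop:auto-robust} shows that $\mcA$ is $(1.1\eps, \tfrac{1}{2}, c)$-robust under the uniform distribution with $c = 3m' = m\cdot\poly(1/\eps)$. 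The point to keep track of is that $c$ is only \emph{polynomial} in $m$ and $1/\eps$.

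Next I would run $\BuildDT$ with target accuracy $\eps/c$ to obtain, via \Cref{thm:decompose}, a depth-$d$ tree $T$ whose induced distribution $\mcD_T$ satisfies $\TV(\mcD, \mcD_T) \le \eps/c$ with high probability. The influence oracle $\BuildDT$ assumes must be supplied approximately: by \Cref{fact:inf-estimates-scaling} and the discussion following it, it suffices to estimate each $\Inf_i(f_{\mcD_\ell})$ to accuracy $\poly(2^{-d}, \eps/(cd^2), 1/n)$, which the analysis of \Cref{thm:decompose} tolerates up to constant factors. For monotone $\mcD$ these estimates come from \Cref{cor:high-accuracy-monotone} using plain samples from $\mcD$; for general $\mcD$ they come from \Cref{cor:high-accuracy} using subcube conditional samples. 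Union bounding over the $\poly(n)\cdot(dc/\eps)^{O(d)}$ oracle calls and substituting $c = \poly(m,1/\eps)$ keeps both the sample/query complexity and the running time of this stage at $\poly(n)\cdot(dm/\eps)^{O(d)}$.

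Finally I would feed $T$, $\mcA$, and a fresh labeled sample $\bS$ of size $M = m'\cdot\poly(2^d, 1/\eps) = \poly(n)\cdot(dm/\eps)^{O(d)}$ into $\Lift$, which internally boosts $\mcA$'s per-leaf failure probability to $O(2^{-d})$ as in the remark following \Cref{thm:lift-given-DT}. Since $\TV(\mcD,\mcD_T)\le\eps/c$ and $\mcA$ is $(1.1\eps, \tfrac{1}{2}, c)$-robust, \Cref{thm:lift-given-DT} gives a hypothesis $O(\eps)$-close to $f^\star$ under $\mcD$ with probability at least $\tfrac{5}{6}$; rescaling $\eps$ by a constant at the outset turns this into the claimed $\eps$-accuracy, and a further application of \Cref{fact:boost} drives the failure probability down to an arbitrary $\delta$ at a $\poly(\log(1/\delta))$ cost. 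In the general (non-monotone) case the only access to $\mcD$ that is not through labeled random samples is the subcube conditioning used inside $\BuildDT$, which matches the theorem statement, and the overall running time is $\poly(n, M)$. I do not expect a genuine obstacle — the conceptual work is done in \Cref{thm:decompose} and \Cref{thm:lift-given-DT} — but \textbf{the step needing the most care} is checking that demanding accuracy $\eps/c$ of $\BuildDT$ does not inflate the final bound, i.e.\ that $c$ (a bound on the boosted sample complexity of $\mcA$) is polynomial in $m$ and $1/\eps$ rather than, say, exponential in $d$.
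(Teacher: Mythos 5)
Your proposal is correct and follows essentially the same route as the paper: robustify $\mcA$ via \Cref{prop:auto-robust} (so $c = O(m)$ up to the $\poly(1/\eps)$ boosting overhead), run $\BuildDT$ with the appropriate influence estimator to get a tree with $\TV(\mcD,\mcD_T) \le \eps/c$, and apply \Cref{thm:lift-given-DT} after boosting the per-leaf failure probability to $O(2^{-d})$. The only cosmetic difference is the order of boosting versus robustification, and your flagged concern---that $c$ stays $\poly(m,1/\eps)$ so the $\eps/c$ accuracy demand on $\BuildDT$ does not inflate the bound---is exactly the bookkeeping the paper's proof relies on by setting the TV accuracy to $\eps/(3m)$.
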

\begin{proof}[Proof of \Cref{thm:lift-formal} given \Cref{thm:lift-given-DT}]
    Using \Cref{thm:decompose}, we can learn the input distribution to TV-distance accuracy $\frac{\eps}{3m}$ with a decision tree hypothesis.  By \Cref{prop:auto-robust}, $\mcA$ $(\eps, \frac{1}{2}, 3m)$-robustly learns $\mathscr{C}$ for the uniform distribution, which can be boosted to failure probability $O(2^{-d})$ using \Cref{fact:boost}. Applying \Cref{thm:lift-given-DT} gives the desired result, with the desired runtime following from \Cref{prop:runtime-lift}.
\end{proof}

 The remainder of this section is devoted to the proof of \Cref{thm:lift-given-DT}. We'll use the following proposition.



\begin{proposition}
    \label{prop:each-leaf-many}
    For any distribution $\mcD$ over $\bits^n$, depth-$d$ decision tree $T$, $m \in \N$ and $p, \delta > 0$, as long as
    \begin{equation}
        \label{eq:M-each-leaf-many}
        M \geq O\left(\frac{d + m + \log(1/\delta)}{p}\right)
    \end{equation}
    for a random sample of $M$ points from $\mcD$, the probability there is a leaf $\ell \in T$ satisfying:
    \begin{enumerate}
        \item High weight: The probability a random sample from $\mcD$ reaches $\ell$ is at least $p$,
        \item Few samples: The number of points in the size-$M$ sample that reach $\ell$ is less than $m$
    \end{enumerate}
    is at most $\delta$.
\end{proposition}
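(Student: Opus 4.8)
The plan is to prove \Cref{prop:each-leaf-many} by a union bound over the (at most $2^d$) leaves of $T$, combined with a multiplicative Chernoff bound for each individual leaf. Fix a leaf $\ell$ and let $w_\ell = \Prx_{\bx \sim \mcD}[\bx \in \ell]$ be its weight. We only care about leaves that are ``bad,'' i.e. those with $w_\ell \ge p$ but which receive fewer than $m$ samples out of the $M$ drawn; call this event $B_\ell$. For a leaf with $w_\ell \ge p$, the number of samples $\bN_\ell$ landing in $\ell$ is a sum of $M$ i.i.d.\ indicators with mean $\mu_\ell = M w_\ell \ge Mp$. Since $Mp \ge c(d + m + \log(1/\delta))$ for a suitable absolute constant $c$, we can choose the constant large enough that $Mp \ge 2m$, so the event $\bN_\ell < m$ is a lower-tail deviation of $\bN_\ell$ below half its mean.

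The key step is the Chernoff estimate: $\Pr[\bN_\ell < m] \le \Pr[\bN_\ell < \tfrac{1}{2}\mu_\ell] \le \exp(-\mu_\ell/8) \le \exp(-Mp/8)$. With $M \ge O((d + m + \log(1/\delta))/p)$ and the implied constant chosen appropriately, $Mp/8 \ge d\ln 2 + \log(1/\delta) + O(1)$, so each bad-leaf probability is at most $\delta \cdot 2^{-d}$ (up to adjusting constants). Then a union bound over the at most $2^d$ leaves of the depth-$d$ tree $T$ gives $\Pr[\exists \ell : B_\ell] \le 2^d \cdot \delta \cdot 2^{-d} = \delta$, as desired. (Leaves with $w_\ell < p$ never trigger $B_\ell$, so they contribute nothing to the union bound.)

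I do not anticipate a genuine obstacle here; the only thing to be careful about is bookkeeping of absolute constants so that a single $O(\cdot)$ in \eqref{eq:M-each-leaf-many} simultaneously (i) forces $Mp \ge 2m$ so that the threshold $m$ sits comfortably below the mean, and (ii) makes $Mp/8$ dominate both the $d$ term coming from the union bound over $2^d$ leaves and the $\log(1/\delta)$ term coming from the target failure probability. One clean way to organize this is to split the ``budget'': choose $M$ so that $Mp \ge 2m$ handles requirement (i), and separately so that $Mp \ge 16(d + \log(1/\delta))$ handles requirement (ii) via the Chernoff bound; adding the two lower bounds on $M$ is still $O((d+m+\log(1/\delta))/p)$. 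A minor alternative, if one wants to avoid even assuming $Mp \ge 2m$, is to note $m \le Mw_\ell$ can be assumed without loss of generality (otherwise the leaf trivially cannot be ``bad'' in expectation — though strictly one still wants the gap, so the split-budget approach above is cleanest). I would present it with the multiplicative Chernoff bound stated explicitly and then the two-line union bound.
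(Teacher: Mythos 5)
Your proposal is correct and follows essentially the same route as the paper's proof: a multiplicative Chernoff lower-tail bound for each leaf of weight at least $p$ (using $Mp \ge 2m$ so that $m$ sits below half the mean), followed by a union bound over the at most $2^d$ leaves. The constant bookkeeping you describe matches what the paper does implicitly.
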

\begin{proof}
    Fix a single leaf $\ell \in T$ with weight at least $p$. Then, the expected number of points that reach this leaf is $\mu \geq Mp$. As long as $\mu \geq 2m$, applying multiplicative Chernoff bounds,
    \begin{equation*}
        \Pr[\text{Fewer than $m$ points reach $\ell$}] \leq \exp\left(-\frac{\mu}{8} \right).
    \end{equation*}
    Union bounding over all leaves $2^d$, it is sufficient to choose an $M$ where
    \begin{equation*}
        2^d \exp\left(-\frac{\mu}{8} \right) \leq \delta.
    \end{equation*}
    This is satisfied for the $M$ from \Cref{eq:M-each-leaf-many}.
\end{proof}

We'll also use that if we have a decision tree $T$ that has learned the PMF to $\mcD$ to high accuracy, $\mcD$ restricted to the leaves of $T$ is, on average over the leaves, close to uniform.
\begin{fact}[Lemma B.4 of \cite{BLMT-boosting}]
    \label{prop:TV-distance-split}
    For any distribution $\mcD$ and decision tree $T$ computing the PMF for some distribution $\mcD_{T}$,
    \begin{equation*}
        \sum_{\text{leaves }\ell \in T} \Prx_{\bx \sim \mcD}[\bx\text{ reaches }\ell] \cdot \TV(\mcD_{\ell}, (\mcD_T)_\ell) \leq 2\cdot \TV(\mcD, \mcD_{T}).
    \end{equation*}
\end{fact}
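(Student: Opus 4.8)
The plan is to prove the inequality \emph{leaf by leaf}: I will show that for every leaf $\ell\in T$,
\[
\Prx_{\bx\sim\mcD}[\bx\text{ reaches }\ell]\cdot\TV(\mcD_\ell,(\mcD_T)_\ell)\ \le\ \sum_{x\in\ell}\big|\mcD(x)-\mcD_T(x)\big|,
\]
and then sum over leaves. Since the leaves of $T$ partition $\bits^n$, the right-hand side sums to $\|\mcD-\mcD_T\|_1 = 2\TV(\mcD,\mcD_T)$, which gives the claim. To set up notation I write $w_\ell=\Prx_{\bx\sim\mcD}[\bx\in\ell]=\sum_{x\in\ell}\mcD(x)$ and $w_\ell^T=\Prx_{\bx\sim\mcD_T}[\bx\in\ell]=\sum_{x\in\ell}\mcD_T(x)$, so that $\mcD_\ell(x)=\mcD(x)/w_\ell$ and $(\mcD_T)_\ell(x)=\mcD_T(x)/w_\ell^T$ for $x\in\ell$. (The hypothesis that $T$ computes the pmf of $\mcD_T$ is what makes $(\mcD_T)_\ell$ literally the uniform distribution on the subcube $\ell$, though the argument below only uses the displayed conditional formula.) Expanding the definition of TV distance and multiplying through by $w_\ell$ yields
\[
w_\ell\cdot\TV(\mcD_\ell,(\mcD_T)_\ell)\ =\ \tfrac12\sum_{x\in\ell}\Big|\mcD(x)-\tfrac{w_\ell}{w_\ell^T}\,\mcD_T(x)\Big|.
\]

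The heart of the argument is a short triangle-inequality manipulation of this sum. For each $x\in\ell$, adding and subtracting $\mcD_T(x)$ gives $\big|\mcD(x)-\tfrac{w_\ell}{w_\ell^T}\mcD_T(x)\big|\le\big|\mcD(x)-\mcD_T(x)\big|+\mcD_T(x)\,\big|1-\tfrac{w_\ell}{w_\ell^T}\big|$; summing over $x\in\ell$ and using $\sum_{x\in\ell}\mcD_T(x)=w_\ell^T$ turns the second term into $\big|w_\ell^T-w_\ell\big|=\big|\sum_{x\in\ell}(\mcD_T(x)-\mcD(x))\big|\le\sum_{x\in\ell}\big|\mcD(x)-\mcD_T(x)\big|$. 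Hence $\sum_{x\in\ell}\big|\mcD(x)-\tfrac{w_\ell}{w_\ell^T}\mcD_T(x)\big|\le 2\sum_{x\in\ell}\big|\mcD(x)-\mcD_T(x)\big|$, which is exactly the per-leaf bound above after dividing by $2$. Summing over all leaves and invoking $\|\mcD-\mcD_T\|_1=2\TV(\mcD,\mcD_T)$ finishes the proof.

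I do not anticipate a real obstacle here; the only point requiring care is the degenerate leaf where $w_\ell^T=0$ (a leaf labelled $0$), for which $(\mcD_T)_\ell$ and the ratio $w_\ell/w_\ell^T$ are undefined. There I will instead use the trivial bound $\TV(\mcD_\ell,(\mcD_T)_\ell)\le 1$, noting that $w_\ell\cdot 1=\sum_{x\in\ell}\mcD(x)=\sum_{x\in\ell}\big|\mcD(x)-\mcD_T(x)\big|$ since $\mcD_T$ vanishes on $\ell$, so the per-leaf inequality still holds; the symmetric case $w_\ell=0$ contributes $0$ to the left side and is trivial. (In the use made of this fact inside the present paper the leaf labels are $2^{|\ell|}\Prx_{\bx\sim\mcD}[\bx\in\ell]$, which forces $w_\ell^T=w_\ell$ and collapses the ratio to $1$, but I will prove the general statement as worded.)
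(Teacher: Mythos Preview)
The paper does not give its own proof of this statement: it is imported verbatim as a fact, with attribution to Lemma~B.4 of \cite{BLMT-boosting}, and used as a black box inside the proof of \Cref{thm:lift-given-DT}. So there is no in-paper argument to compare against.

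Your proof is correct. The per-leaf identity
\[
w_\ell\cdot\TV(\mcD_\ell,(\mcD_T)_\ell)=\tfrac12\sum_{x\in\ell}\Big|\mcD(x)-\tfrac{w_\ell}{w_\ell^T}\,\mcD_T(x)\Big|
\]
is just the definition of TV distance after clearing denominators, and your triangle-inequality step---adding and subtracting $\mcD_T(x)$, then bounding $|w_\ell-w_\ell^T|$ by $\sum_{x\in\ell}|\mcD(x)-\mcD_T(x)|$---cleanly gives the factor of $2$. Summing over leaves and using that they partition $\bits^n$ finishes it. Your handling of the degenerate leaves ($w_\ell^T=0$ or $w_\ell=0$) is also fine. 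One minor remark: in this paper's conventions $\mcD_\ell$ is a distribution on $\bits^{n-|\ell|}$ rather than on the subcube $\ell\subseteq\bits^n$, but this is a relabelling that does not affect your sums.
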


\begin{proof}[Proof of \Cref{thm:lift-given-DT}]
    First, we split up the accuracy of $h = \Lift(T, \mcA, \bS)$ into the accuracy of the hypotheses $h_{\ell}$ learned at each leaf $\ell$: 
    \begin{equation*}
        \Prx_{\bx \sim \mcD}[h(\bx) \neq f^\star(\bx)] = \sum_{\ell \in T}\Prx_{\bx \sim \mcD}[\bx\text{ reaches }\ell] \cdot \Prx_{\bx \sim \mcD_\ell}[h_{\ell}(\bx) \neq f^\star(\bx)].
    \end{equation*}
    Each hypothesis $h_{\ell}$ is learned by running $\mcA$ on the sample $\bS_{\ell}'$. First, we argue that for all leaves $\ell$ with much of $\mcD$'s weight, will, whp, have $\geq m$ samples. Applying \Cref{prop:each-leaf-many} with probability at least $1 - \delta/2$, for all $\ell \in T$ with $\Prx_{\bx \sim \mcD}[\bx\text{ reaches }\ell] \geq  \frac{\eps}{2^d}$, the size of $\bS_{\ell}'$ is at least $m$. Conditioning on that being true, we have that,
    \begin{align*}
        \Prx_{\bx \sim \mcD}[h(\bx) \neq f^\star(\bx)] &\leq  \sum_{\ell \in T}\Prx_{\bx \sim \mcD}[\bx\text{ reaches }\ell] \cdot \Prx_{\bx \sim \mcD_\ell}[h_{\ell}(\bx) \neq f^\star(\bx) \mid |\bS_{\ell}'| \geq m] \\
        &\quad+  \sum_{\ell \in T}\Prx_{\bx \sim \mcD}[\bx\text{ reaches }\ell] \cdot \Ind[\Prx_{\bx \sim \mcD}[\bx\text{ reaches }\ell] \leq p] \\
        & \leq \sum_{\ell \in T}\Prx_{\bx \sim \mcD}[\bx\text{ reaches }\ell] \cdot \Prx_{\bx \sim \mcD_\ell}[h_{\ell}(\bx) \neq f^\star(\bx) \mid |\bS_{\ell}'| \geq m] + \eps,
    \end{align*}
    where the last line uses that $T$ has at most $2^d$ leaves and $p = \frac{\eps}{2^d}$.

    Each point in $\bS_{\ell}'$ is an iid sample with the input uniform over $\bits^n$ and labeled by the function $f^\star_{\ell}$. As $\mathscr{C}$ is closed under restrictions, $f^\star_{\ell} \in \mathscr{C}$. Therefore,
    \begin{equation*}
        \Prx_{\bS}\left[\left[\Prx_{\bx \sim \mcD_\ell}[h_{\ell}(\bx) \neq f^\star(\bx)\mid |\bS_{\ell}'| \geq m ]\right] \leq \eps +  c \cdot \TV(\mcD_{\ell}, \mcU)\right] \geq 1 - \frac{\delta}{2 \cdot 2^d}.
    \end{equation*}
    We union bound over all $2^d$ leaves $\ell$ and the earlier event that $| \bS_{\ell}| \geq m$ whenever $\Prx_{\bx \sim \mcD}[\bx\text{ reaches }\ell] \geq p$, we have with probability at least $1 - \delta$,
    \begin{align*}
        \Prx_{\bx \sim \mcD}[h(\bx) \neq f^\star(\bx)] & \leq \sum_{\ell \in T}\Prx_{\bx \sim \mcD}[\bx\text{ reaches }\ell]\cdot \left(2\eps + c \cdot \TV(\mcD_{\ell}, \mcU)\right)\\
        &= 2\eps + c\cdot \sum_{\ell \in T}\Prx_{\bx \sim \mcD}[\bx\text{ reaches }\ell]\cdot \TV(\mcD_{\ell}, \mcU)\\
        &= 2\eps + c\cdot \sum_{\ell \in T}\Prx_{\bx \sim \mcD}[\bx\text{ reaches }\ell]\cdot \TV(\mcD_{\ell}, (\mcD_T)_\ell)\\
        &\leq  2\eps + c\cdot 2 \cdot \TV(\mcD, \mcD_T) \tag{\Cref{prop:TV-distance-split}} \\
        & \leq 4\eps. \tag{$\TV(\mcD, \mcD_T) \leq \frac{\eps}{c}$}
    \end{align*}
    As desired, we have $\Lift$ learns an $O(\eps)$-accurate hypothesis w.h.p.
\end{proof}

\begin{proposition}[Runtime of \Lift]
    \label{prop:runtime-lift}
    Assuming unit-time calls to $\mathcal{A}$, given a depth-$d$ decision tree $T$, $\Lift(T, \mathcal{A}, S)$ runs in time $O(n2^d \cdot |S|)$.
\end{proposition}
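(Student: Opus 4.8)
The plan is a routine accounting of the work done in the pseudocode of \Cref{fig:lift}. Since $T$ has depth $d$, it has at most $2^d$ leaves, so the outer loop over $\ell \in T$ executes at most $2^d$ times; it then suffices to bound the cost of each step and of emitting the final hypothesis.

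First I would handle the partition of $S$ into the sets $S_\ell$. Routing a single labeled point $(x, f^\star(x))$ from the root of $T$ to the leaf it reaches takes $O(d) \le O(n)$ time, since at each of the at most $d$ internal nodes on its root-to-leaf path we inspect one coordinate of $x$. Doing this once for every point in $S$ produces all of $\{S_\ell\}_{\ell \in T}$ in total time $O(d\,|S|) \le O(n\,|S|)$, and in particular these sets partition $S$, so $\sum_{\ell \in T}|S_\ell| = |S|$. Next, forming $S_\ell'$ from $S_\ell$ requires, for each point, copying it and rerandomizing the $\le d$ coordinates queried on the path to $\ell$, which is $O(n)$ per point and hence $O(n\,|S_\ell|)$ for leaf $\ell$; summing over leaves and using that the $S_\ell$ partition $S$, the total cost of this step over the whole run is $O(n\,|S|)$. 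Finally, by the assumption of unit-time calls to $\mathcal{A}$, each of the at most $2^d$ invocations producing a hypothesis $h_\ell$ costs $O(1)$, for a total of $O(2^d)$; and emitting the returned hypothesis — a description of $T$ together with the list of leaves and their associated $h_\ell$ — costs $O(2^d)$.

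Adding these contributions, the $|S|$-dependent work is $O(n\,|S|)$ and the remaining work is $O(2^d)$; since $d \le n$ and we may assume $|S| \ge 1$, every term is dominated by $O(n 2^d\,|S|)$, which gives the claimed bound. There is no genuine obstacle here; the only point worth a moment's care is to route the points through $T$ globally rather than rescanning all of $S$ once per leaf (which would give the weaker $O(n 2^d\,|S|)$ directly, but is in any case no worse than the stated bound), and to invoke the hypothesis that the $S_\ell$ partition $S$ so that the per-point work of Steps~1 and~2 is charged only once.
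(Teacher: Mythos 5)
Your proof is correct and is essentially the same routine accounting as the paper's, which simply rescans all of $S$ once per leaf to get $O(n\,2^d\cdot|S|)$ directly — the alternative you yourself note in passing. Your global-routing refinement actually yields the tighter bound $O(n\,|S| + 2^d)$, but since the proposition only claims $O(n\,2^d\cdot|S|)$, both arguments suffice.
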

\begin{proof}
    The number of leaves in $T$ is at most $2^d$. Since each point in $S$ is in $\bits^n$, the entire sample takes $O(n \cdot |S|)$ bits to represent. For each leaf $\ell$, it takes $O(n \cdot |S|)$ time to loop through this representation, find the points consistent with $\ell$, and create the modified dataset $S_\ell'$, and pass it into $\mcA$. Repeating this over all leaves can be done in $O(n2^d \cdot |S|)$ time.
\end{proof}

\begin{remark}[The agnostic setting]
\label{remark:agnostic}
    A popular variant of learning, as defined in \Cref{def:learn}, is the \emph{agnostic setting}. In this generalization of standard learning, rather than assume $f^\star \in \mathscr{C}$, $\mcA$ is required to output a hypothesis with error $\opt + \eps$, where $\opt$ is the minimum error of a hypothesis in $\mathscr{C}$ w.r.t $f^\star$. It's straightforward to see that \Cref{thm:lift-given-DT}, and therefore \cref{thm:lift-formal}, extend to the agnostic setting, where if the uniform distribution learning $\mcA$ succeeds in the agnostic setting, that learner is upgraded to one that succeeds for decision tree distributions in the agnostic setting. This is because, if there is an $f \in \mathscr{C}$ with error $\opt$ w.r.t. $f^\star$, then the average error of $f$ over the leaves of a tree $T$ w.r.t $f^\star$ will also be at most $\opt$.
\end{remark}

\section*{Acknowledgements}

We thank the STOC reviewers for their detailed feedback, especially for the references to the literature on semi-supervised learning. 

Guy and Li-Yang are supported by NSF awards 1942123, 2211237, and 2224246. Jane is
supported by NSF Award CCF-2006664. Ali is supported by a graduate fellowship award from Knight-Hennessy Scholars at Stanford University.

\bibliographystyle{alpha}
\bibliography{ref}

\end{document}